\documentclass{article}

\usepackage{microtype}
\usepackage{graphicx}
\usepackage{subcaption}
\usepackage{booktabs} 

\usepackage{hyperref}

\usepackage[dvipsnames,table]{xcolor}
\definecolor{oursblue}{RGB}{232,240,252}

\newcommand{\orth}{\operatorname{orth}}

\usepackage[normalem]{ulem}
\usepackage{caption}

\scriptsize

\usepackage[accepted]{icml2026}

\usepackage{amsmath}
\usepackage{amssymb}
\usepackage{mathtools}
\usepackage{amsthm}
\usepackage{enumitem}
\usepackage{algorithm}
\usepackage{algorithm}
\usepackage{multirow}
\usepackage{makecell}

\makeatletter
\@ifundefined{algorithmic}{}{%

}
\makeatother

\usepackage{algpseudocode}

\usepackage{algpseudocode}

\algnewcommand\algorithmicoutput{\textbf{Output:}}
\algnewcommand\Output{\item[\algorithmicoutput]}

\usepackage[capitalize,noabbrev]{cleveref}

\theoremstyle{plain}
\newtheorem{theorem}{Theorem}[section]
\newtheorem{proposition}[theorem]{Proposition}

\theoremstyle{definition}

\theoremstyle{remark}

\usepackage[textsize=tiny]{todonotes}

\usepackage{pifont}
\newcommand{\cmark}{\ding{51}}
\newcommand{\xmark}{\ding{55}}

\icmltitlerunning{Selective Backward Refinement for Parameter-Efficient Continual Learning}

\begin{document}

\twocolumn[
  
  \icmltitle{ Turning Back Without Forgetting: Selective Backward Refinement for Parameter-Efficient Continual Learning}



  \icmlsetsymbol{equal}{*}
  
\begin{icmlauthorlist}
  \icmlauthor{Anushka Tiwari}{iaids}
  \icmlauthor{Kaiyi Ji}{cse}
\end{icmlauthorlist}

\icmlaffiliation{iaids}{Institute for Artificial Intelligence and Data Science, University at Buffalo, Buffalo, NY, USA}
\icmlaffiliation{cse}{Department of Computer Science and Engineering, University at Buffalo, Buffalo, NY, USA}

\icmlcorrespondingauthor{Kaiyi Ji}{kaiyiji@buffalo.edu}

\icmlkeywords{Machine Learning, Continual Learning, Prompt Learning}

\vskip 0.3in
]



\printAffiliationsAndNotice{}  

\begin{abstract}
While prompt-based parameter-efficient continual learning mitigates catastrophic forgetting by isolating task-specific prompts, this isolation also limits later tasks from improving earlier ones, leaving backward knowledge transfer underexplored. We address this limitation by proposing \textbf{S}elective b\textbf{A}ckward refinement for positive \textbf{B}ackward knowledge transf\textbf{ER} (SABER), a replay-free framework that enables controlled backward transfer in prompt-based continual learning. SABER determines {\em when} backward refinement is beneficial using complementary task-correlation criteria based on prompt-gradient geometry and loss-distribution similarity, and {\em how} to perform refinement safely by restricting updates to non-interfering directions in the prompt parameter space. Extensive experiments across multiple continual learning benchmarks, and diverse pretrained backbones, including T5-Large, LLaMA, and Qwen, demonstrate that SABER consistently achieves positive backward transfer while maintaining strong overall average performance. Code is available at \url{https://github.com/OptMN-Lab/SABER-ICML-2026/}.
\end{abstract}

\section{Introduction}
Continual learning (CL) \cite{van2018three} studies how models can be trained on a sequence of tasks over time while maintaining strong performance on earlier tasks. In recent years, parameter-efficient fine tuning (PEFT) \cite{wang2022learning, smith2023coda, feng2024cp} has emerged as a central paradigm in continual learning, offering a computationally efficient alternative to full model retraining for large pretrained models. By restricting updates to a small set of task-specific parameters, such as prompts \cite{razdaibiedina2023progressive, feng2024cp, tiwari2025grid}, adapters \cite{yu2024boosting, zhao2024sapt}, or low-rank weight updates \cite{liang2024inflora, wu2025sd}, PEFT methods have proven effective at mitigating catastrophic forgetting.

Among PEFT-based continual learning approaches, prompt-based methods have emerged as an effective and lightweight solution. Rather than updating backbone weights or introducing additional trainable modules, these methods represent task-specific knowledge through soft prompts that are prepended to the input (or injected into intermediate representations), while keeping the pretrained model entirely frozen \cite{lester2021power, li2021prefix}. As a result, prompt-based continual learning typically requires substantially fewer trainable parameters per task than adapter- or LoRA-based alternatives, making it particularly appealing for long task sequences and large-scale pretrained models. Building on this paradigm, a growing body of work has explored prompt-based continual learning via progressive prompt construction, prompt pools, and shared prompt mechanisms, aiming to maintain task isolation while enabling forward transfer across tasks \cite{wang2022learning, wang2022dualprompt, razdaibiedina2023progressive, smith2023coda, zhao2024sapt, wu2024mitigate}.

Despite their strong empirical performance, most prompt-based continual learning methods mitigate catastrophic forgetting by enforcing strict task isolation: once a task-specific prompt is learned, it is frozen and never updated as new tasks arrive \cite{wang2022dualprompt, razdaibiedina2023progressive,wu2024mitigate}. While effective for preserving past-task accuracy, this design also prevents later tasks from improving earlier task representations, even when tasks are related and shared knowledge could be beneficial \cite{lin2022beyond, chaudhry2018efficient}. Recent work has started to explore backward knowledge transfer under parameter isolation \cite{wong2024learning, li2026turning}; for example, \citet{wong2024learning} selectively updates task masks using gradient-based signals and replayed data to improve past-task performance, and \citet{li2026turning} proposes causal-aware LoRA to guide current adapter updates using prior-task signals. However, these approaches do not directly refine task-specific representations once learned. In prompt-based continual learning, where task knowledge is primarily encoded in the prompts, freezing prior prompts therefore makes backward knowledge transfer inherently difficult, and leveraging later tasks to refine earlier prompts remains relatively underexplored \cite{guo2024q, zhao2024sapt}.

These observations motivate the design of prompt-based continual learning methods that not only preserve past knowledge, but also enable \emph{backward} knowledge transfer to improve earlier tasks. To achieve this, two questions are critical. First, we need to decide \emph{when} backward updates are helpful, since only sufficiently related tasks are likely to benefit from such transfer \cite{lin2022beyond}. Second, even when such relationships exist, the model must decide \emph{how} to update task-specific parameters to promote positive transfer while avoiding negative interference~\cite{sahagradient}. This calls for mechanisms that can identify task correlations and selectively regulate backward updates to balance performance improvement with knowledge preservation. 

Our main contributions are summarized as follows: 
\vspace{-0.2cm}

\begin{itemize}
\item We propose {\bf S}elective b{\bf A}ckward refinement for positive {\bf B}ackward knowledge transf{\bf ER} (SABER), the first framework to enable a replay-free positive backward  transfer in prompt-based continual learning. SABER involves (i) a task-correlation criterion to identify when backward refinement is beneficial and (ii) a selective backward prompt refinement mechanism that constrains updates to non-interfering directions, preserving previously learned knowledge. Our theoretical guarantees show that backward refinements are non-interfering and yield non-increasing loss under mild  conditions.
\vspace{-0.1cm}

\item We evaluate SABER on multiple continual learning benchmarks with different task orders and diverse model backbones, including T5-Large, Qwen, and LLaMA, and observe consistent improvements in both average performance (AP) and backward knowledge transfer (BWT).  
Among the compared methods, SABER is the only approach that consistently exhibits positive backward knowledge transfer. We also show that SABER is modular and can be readily integrated into existing prompt-based continual learning frameworks, including frozen prompt pools and shared prompt augmentation.


\end{itemize}

\begin{table}[t]
\centering
\caption{Comparison of continual prompt tuning methods across key properties.
\cmark~denotes satisfied properties; \xmark~denotes otherwise.}
\label{tab:comparison}
\resizebox{\columnwidth}{!}{%
\begin{tabular}{lccc}
\toprule
\textbf{Method} & \textbf{Replay-Free} & \textbf{Update Prior Prompts} & \textbf{Positive Backward Transfer} \\
\midrule
LFPT5 (ICLR'22)                     & \xmark & \xmark & \xmark \\
ProgPrompt (ICLR'23) & \cmark & \xmark & \xmark \\
CODA-Prompt (CVPR'23)               & \cmark & \xmark & \xmark \\
SHLPT (ACL'24)                     & \cmark & \xmark & \xmark \\
SAPT (ACL'24)                        & \cmark & \xmark & \xmark \\
\midrule
\textbf{SABER (ours)}                                    & \cmark & \cmark & \cmark \\
\bottomrule
\end{tabular}%
}
\vspace{-0.3cm}
\end{table}

\vspace{-0.5cm}






\section{Related Work}
\vspace{-0.1cm}

\textbf{Continual learning.} Continual learning studies how models learn from a sequence of tasks without access to all past data, while mitigating catastrophic forgetting, where learning new tasks degrades performance on earlier ones \cite{mccloskey1989catastrophic}. Existing CL methods are typically grouped into three categories: \emph{memory-based} methods that replay past data \cite{shin2017continual,bang2021rainbow,hao2023bilevel}, \emph{regularization-based} methods that constrain parameter updates to preserve prior knowledge \cite{kirkpatrick2017overcoming,zenke2017continual}, and \emph{architecture-based} methods that introduce task-specific components to isolate representations \cite{rusu2016progressive,lee2017lifelong,wang2022dualprompt}. While effective, these approaches face scalability challenges for large pretrained models, motivating parameter-efficient CL methods that rely on lightweight components such as prompts or adapters to enable continual learning with minimal parameter growth \citep{xu2023parameter,ruckle2021adapterdrop}.

\vspace{-0.1cm}

\textbf{Parameter-efficient continual learning.} PEFT adapts large pretrained models by adding lightweight, task-specific components while keeping the backbone frozen, enabling continual learning with low computational and memory cost. Typical approaches include adapters and low-rank adaptation, which reduce task interference by updating only a small number of additional parameters \cite{ruckle2021adapterdrop,hu2022lora}. Prompt tuning is a widely used PEFT strategy that learns task-specific continuous prompts appended to the input or intermediate representations \cite{lester-etal-2021-power,li2021prefix,gu2022ppt}. Continual prompt tuning extends this idea to task sequences via prompt concatenation, prompt pools, or shared prompts \cite{zhu2022continual,yin2022contintin,wang2022dualprompt,razdaibiedina2023progressive,smith2023coda}. While effective at preserving prior performance, many existing methods either rely on replay buffers \cite{zhu2022continual} or avoid updating previously learned task components, which may hinder consistent positive backward transfer
 \cite{zhao2024sapt, wu2024mitigate, qiao2024prompt, tiwari2025grid, lu2025continual}.

\vspace{-0.1cm}
\textbf{Task similarity and knowledge transfer.}
Existing approaches leverage task similarity by expanding model architectures or shaping gradient updates, such as through gradient alignment or projection into task-specific subspaces to reduce interference \cite{rusu2016progressive,lee2017lifelong,lopez2017gradient,sahagradient,farajtabar2020orthogonal}. These methods mainly use task similarity to guide forward learning or constrain updates, rather than to improve previously learned tasks. Recent work explores backward knowledge transfer using optimization-, architecture-, or replay-based strategies \cite{lin2022beyond,wang2026self,shin2017continual,li2026turning}, but typically relies on data replay or global updates. In contrast, we use task similarity to selectively refine prompts, enabling replay-free positive backward transfer.


\vspace{-0.2cm}
\section{Problem Formulation}

\subsection{Problem Definition}
Consider a task-incremental continual learning setting with a sequence of tasks $\mathcal{T}=\{T_1,\dots,T_T\}$ arriving sequentially. Each task $T_t$ is associated with a labeled dataset $\mathcal{D}_t=\{(x_{t,i},y_{t,i})\}_{i=1}^{N_t}$, where $x_{t,i}$ denotes the input and $y_{t,i}$ the corresponding label. We consider a pretrained language model $f(\cdot;\theta)$ with frozen parameters $\theta$, and adapt it to each task using soft prompts.  

For each task $T_t$, a task-specific soft prompt $u_t\in\mathbb{R}^{\ell\times d}$ is defined, where $\ell$ denotes the prompt length and $d$ is the embedding dimension. After observing tasks $\{T_1,\dots,T_{t-1}\}$, the set of learned prompts is maintained as $\mathcal{U}_{t-1}=\{u_1,\dots,u_{t-1}\}$. When a new task $T_t$ arrives, a new prompt $u_t$ is initialized and optimized on $\mathcal{D}_t$ with the backbone $f(\cdot;\theta)$ fixed; optionally, a subset of previously learned prompts $\mathcal{S}_t\subseteq\mathcal{U}_{t-1}$ could be further updated. Let $\mathcal{U}^{(t)}=\mathcal{U}_{t-1}\cup\{u_t\}$ denote the prompt set at time $t$. Given an input $x$ and a prompt $u$, the model prediction is $\hat{y}=f(x;u)$, and the task-$t$ loss is 
$\mathcal{L}_t(u_t)=\mathbb{E}_{(x,y)\sim\mathcal{D}_t}\big[\ell\big(f(x;u_t),y\big)\big],$
where $\ell(\cdot)$ denotes the per-example loss.  We further define $\mathcal{L}_t(B,u_t)$ as the mini-batch averaged loss on a batch $B \subset \mathcal{D}_t$ under prompt $u_t$.

\subsection{Challenges of Positive Backward Transfer in Prompt-Based Continual Learning}\label{sec:challenges}


Prompt-based CL addresses forgetting by isolating task-specific knowledge in lightweight prompts, but this  isolation can hinder backward transfer: updating a previously learned prompt using gradients from a new task may introduce task-mismatched modifications and overwrite directions that were critical for the earlier task, resulting in negative backward transfer. {\em The following examples illustrate that unconstrained backward updates can be unreliable.}

{\bf Example 1: Unrelated Tasks.}
Consider two tasks $\mathcal{T}_1$ and $\mathcal{T}_2$ with distinct objectives and non-overlapping semantics, such as tasks with different input-output formats (e.g., entailment vs.~sentiment analysis). 
Let $u_1$ denote the prompt learned for $\mathcal{T}_1$. During training on $\mathcal{T}_2$, performing unconstrained backward updates on $u_1$ using gradients from $\mathcal{T}_2$ can drive $u_1$ in directions that are misaligned with what is important for $\mathcal{T}_1$, thereby degrading performance on the earlier task. Table~\ref{tab:diag_unrelated} demonstrates this phenomenon across several unrelated task pairs, where backward prompt updates consistently result in negative pairwise backward transfer. 

\begin{table}[h]
\vspace{-0.1cm}
\centering
\scriptsize
\caption{\small
Unconstrained backward prompt updates on \textbf{unrelated task pairs} lead to negative backward transfer.
We report accuracy on the earlier task $T_1$ before and after updating its prompt during training on $T_2$.
$\Delta$Acc denotes pairwise backward transfer (BWT).
}
\label{tab:diag_unrelated}
\begin{tabular}{lccc}
\toprule
\textbf{Pair ($T_1 \leftarrow T_2$)} &
Acc$_{T_1}$ (Before) &
Acc$_{T_1}$ (After) &
$\Delta$Acc (BWT) \\
\midrule
MNLI $\leftarrow$ Yahoo & 0.867 & 0.847 & -0.020 \\
QQP $\leftarrow$ Yelp & 0.484 & 0.472 & -0.012 \\
WiC $\leftarrow$ MultiRC & 0.514 & 0.354 & -0.160 \\
QQP $\leftarrow$ MultiRC & 0.476 & 0.454 & -0.022 \\
\bottomrule
\end{tabular}
\vspace{-0.4cm}
\end{table}

{\bf Example 2: Related Tasks.} 
Now consider tasks that are more closely related, e.g., those sharing similar input-output structures or underlying semantics. 
In this case, one might expect knowledge from $\mathcal{T}_2$ to refine the earlier prompt $u_1$ and yield positive backward transfer. However, even when tasks are related, unconstrained backward updates can still disrupt directions in $u_1$ that are crucial for $\mathcal{T}_1$, which can diminish or eliminate any potential gains. Table~\ref{tab:diag_related} shows that such updates often result in negligible or even negative backward transfer, even for related task pairs. 

\begin{table}[h]
\vspace{-0.1cm}
\centering
\scriptsize
\caption{\small
Even for \textbf{related task pairs}, unconstrained backward prompt updates fail to yield positive backward transfer.
This motivates the need for geometry-aware update directions.
}
\label{tab:diag_related}
\begin{tabular}{lccc}
\toprule
\textbf{Pair ($T_1 \leftarrow T_2$)} &
Acc$_{T_1}$ (Before) &
Acc$_{T_1}$ (After) &
$\Delta$Acc (BWT) \\
\midrule
IMDb $\leftarrow$ Amazon & 0.958 & 0.914 & -0.044 \\
Yelp $\leftarrow$ Amazon & 0.603 & 0.482 & -0.121 \\
IMDb $\leftarrow$ SST-2 & 0.950 & 0.948 & -0.002 \\
BoolQ $\leftarrow$ MultiRC & 0.832 & 0.792 & -0.040 \\
\bottomrule
\end{tabular}
\vspace{-0.1cm}
\end{table}


Together, these examples highlight two requirements for achieving positive backward transfer in prompt-based continual learning. First, backward refinement needs to be \emph{selective}: only a subset of prior tasks are compatible with the current task’s learning signal, and semantic similarity or task labels alone do not reliably identify them. Second, backward refinement needs to be \emph{constrained}: even for compatible tasks, unconstrained updates can still overwrite directions that are important for earlier prompts, limiting potential gains. These observations motivate principled criteria for task compatibility and geometry-aware update rules that encourage transfer while reducing interference.

\vspace{-0.2cm}
\section{Method}

We build on the insights from \Cref{sec:challenges} that backward refinement in continual learning is \emph{not universally beneficial}, but depends critically on whether the current task’s learning signal is compatible with previously learned tasks. This calls for a principled mechanism that can explicitly identify which prior prompts can benefit from refinement and enforce update rules that prevent interference. We therefore propose a framework for \emph{selective backward refinement}  that jointly determines \emph{when} backward updates should be performed and \emph{how} to apply them safely. An overview of the proposed framework is shown in Figure~\ref{fig:main} in Appendix.

\begin{figure*}[htbp]
    \centering
    \includegraphics[width=0.93\textwidth]{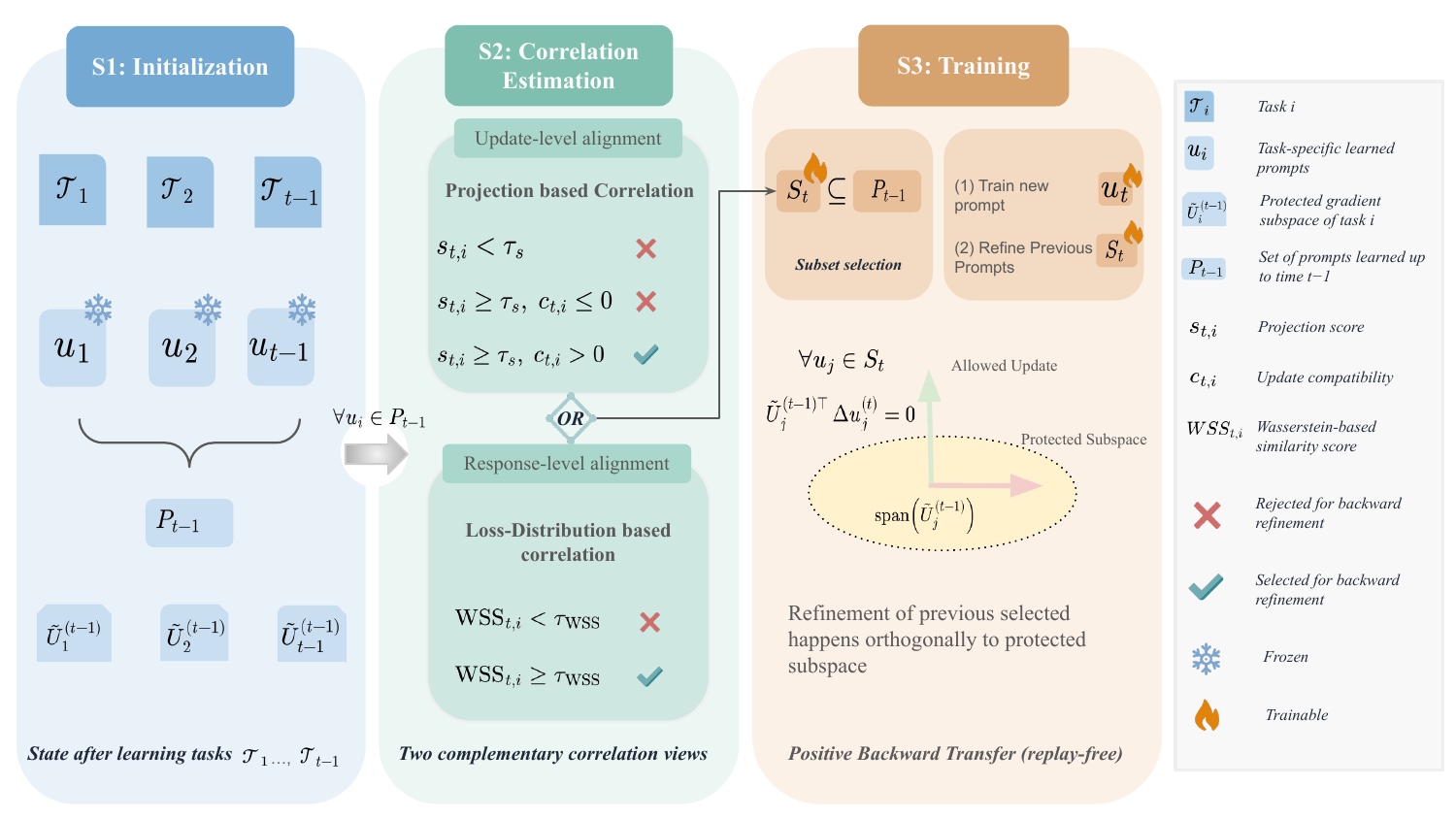}
    \caption{
    Overview of the proposed SABER framework.
    (S1) the state after learning tasks $\mathcal{T}_1,\dots,\mathcal{T}_{t-1}$, where task-specific prompts have been learned and frozen and their corresponding protected gradient subspaces are maintained; at this point, the model is now at task $\mathcal{T}_t$ and a new prompt $u_t$ is initialized for learning;
    (S2) task correlation estimation using either update-level criteria (projection and compatibility) or response-level criteria (loss-distribution) to select prior prompts; and
    (S3) training on task $\mathcal{T}_t$, where $u_t$ is learned, and selected prior prompts are refined orthogonally to their protected subspaces, enabling positive backward transfer.
    }
    \label{fig:main}
\end{figure*}



\subsection{Task Correlation for Selective Backward Refinement}
\vspace{-0.1cm}

We assess task relatedness from two complementary perspectives:
an \emph{update-level} view, which characterizes compatibility through the geometry
of parameter updates induced by the current task, and a \emph{response-level} view,
which compares how tasks induce similar model behavior as reflected in their loss
responses.

\vspace{-0.2cm}

\subsubsection{Projection-Based Task Correlation}
For a prior task $\mathcal{T}_i$, we denote its task-specific prompt by
$u_i\in\mathbb{R}^{\ell\times d}$. We collect prompt gradients
$\nabla_{u_i}\mathcal{L}_i(u_i)$ over $\mathcal{D}_i$, and let
$U_i\in\mathbb{R}^{(\ell d)\times r}$ be an orthonormal basis of the gradient
subspace of the task $\mathcal{T}_i$, where $r\ll \ell d$ (see Appendix~\S\ref{app:svd_gradient_subspace}
for details). We further denote the average gradient during training of
$\mathcal{T}_i$ by
$
\bar{g}_i=\mathbb{E}_{B\sim\mathcal{D}_i}\big[\nabla_{u_i}\mathcal{L}_i(B;u_i)\big].
$
For a new task $\mathcal{T}_t$, the gradient of the current-task loss with
respect to $u_i$ is denoted as 
$g_{t\rightarrow i}=\nabla_{u_i}\mathcal{L}_t(u_i)$,
with mean denoted by 
$
\bar{g}_{t\rightarrow i}=\mathbb{E}_{B\sim\mathcal{D}_t}\big[\nabla_{u_i}\mathcal{L}_t(B;u_i)\big].
$


Task correlation is captured by projecting the current-task gradient $g_{t\rightarrow i}$ onto the subspace spanned by $U_i$ and computing the relative projection magnitude. Formally, the projection score is defined as
\vspace{-0.3cm}
\begin{align}
s_i
= \mathbb{E}\!\left[
\frac{\|U_i U_i^\top g_{t\rightarrow i}\|_2}{\|g_{t\rightarrow i}\|_2}
\right].
\label{eq:projection_alignment}
\end{align}

The score $s_i\in[0,1]$ measures the fraction of the current-task update that lies within the gradient subspace identified during the training of task $\mathcal{T}_i$. 
A large value of $s_{i}$ suggests that $\mathcal{T}_i$ and $\mathcal{T}_t$ share sufficient common bases in the prompt-parameter
space, indicating stronger alignment at the update level. While a sufficient projection score suggests a potentially strong correlation, it does not ensure positive correlation.

We therefore quantify the \emph{gradient alignment} between the current task $\mathcal{T}_t$ and the prior task $\mathcal{T}_i$ with respect to the prompt $u_i$ as:

\vspace{-0.5cm}
\begin{align}
    c_i
= \max\!\left(
\frac{\langle \bar{g}_i, \bar{g}_{t\rightarrow i} \rangle}
{\|\bar{g}_i\|_2 \, \|\bar{g}_{t\rightarrow i}\|_2},\, 0
\right).
\label{eq:gradient_direction}
\end{align}
A sufficient projection score $s_i$ together with $c_i>0$ indicates positive correlation between tasks, and the potential for positive backward transfer from $\mathcal{T}_t$ to $\mathcal{T}_i$.

\begin{figure}[t]
    \centering
\includegraphics[width=0.45\textwidth]{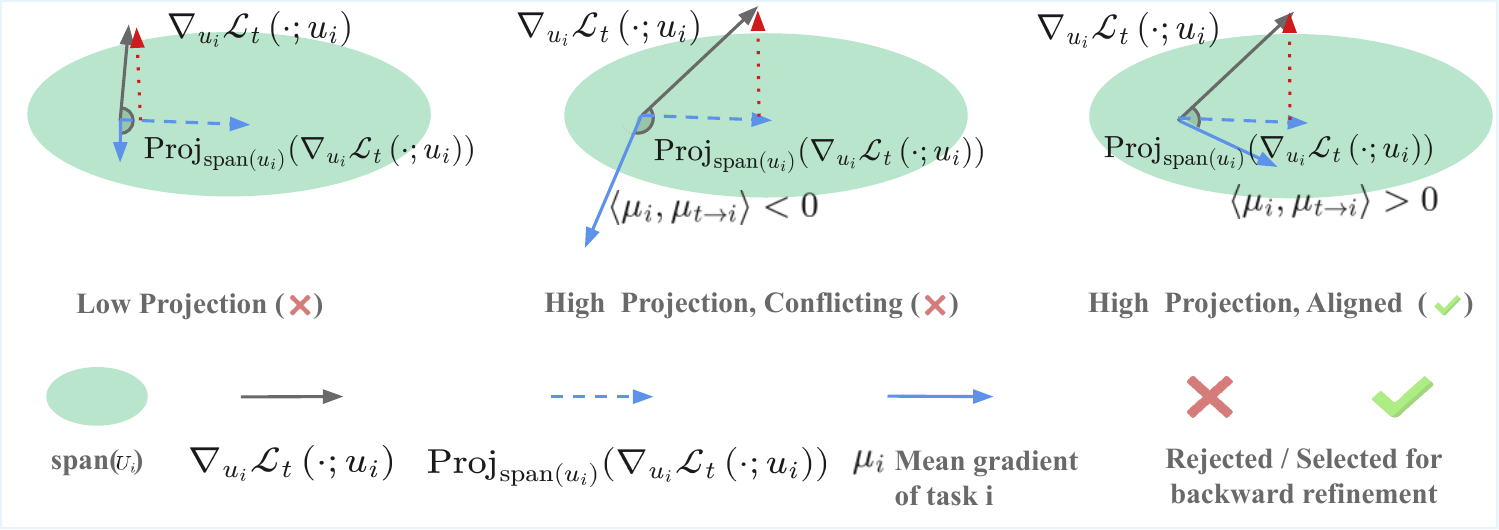}
\caption{\small Projection-based task correlation used to select prior prompts for backward refinement.
A prior task is selected only when the current-task gradient has sufficient projection onto its
gradient subspace and is directionally aligned with its gradient.}
    \label{fig:PP}
    \vspace{-0.3cm}
\end{figure}

\subsubsection{Loss-Distribution-Based Task Correlation}

While projection-based task correlation captures compatibility between tasks at the
level of prompt gradients and reflects local learning signals, task relatedness may
also be assessed from a response-level perspective. In particular, distributions of
per-batch losses summarize how the model responds to a task across inputs, providing
a complementary and more global view of task similarity.

For each task $\mathcal{T}_i$, let $\ell(B;u)$ denote the mini-batch averaged loss evaluated on a batch $B$, where $u=\varnothing$ indicates that no prompt is used. For any task $\mathcal{T}_t$ and any prompt $u_i$ (including $u_i=\varnothing$), we define the empirical loss distribution $\mathcal{P}_t^{(i)}=\{\ell(B_j;u_i)\}_{j=1}^{|B_t|}$ with batches $B_j\sim\mathcal{D}_t$. 
Under this notation, $\mathcal{P}_i^{(0)}$ corresponds to the frozen-backbone loss distribution for task $\mathcal{T}_i$, and $\mathcal{P}_i^{(i)}$ denotes the prompt-induced loss distribution after learning the task-specific prompt $u_i$. 
When a new task $\mathcal{T}_t$ arrives, we compute $\mathcal{P}_t^{(0)}$ and, for each prior prompt $u_i$, the cross-task distribution $\mathcal{P}_t^{(i)}$. Finally, let $W(\cdot,\cdot)$ denote the Wasserstein distance between empirical loss distributions~\cite{vallender1974calculation,oord2018representation}.

To find correlation of task $\mathcal{T}_t$ with a prior task $\mathcal{T}_i$, define the backbone-level and prompt-induced distances as
\[
d_i^{(0)} = W\!\left(\mathcal{P}_i^{(0)}, \mathcal{P}_t^{(0)}\right)
\quad \text{and} \quad
d_i^{(i)} = W\!\left(\mathcal{P}_i^{(i)}, \mathcal{P}_t^{(i)}\right).
\]
The Wasserstein-based similarity score is then defined as
\begin{align}
\mathrm{WSS}_i
= d_i^{(0)} - d_i^{(i)}.
\label{eq:wasserstein_separation}
\end{align}
The underlying intuition (see Figure~\ref{fig:WSS} in the appendix) is that if the task-specific knowledge learned for $\mathcal{T}_i$ transfers to $\mathcal{T}_t$, then applying the prompt $u_i$ to both tasks is expected to reduce the discrepancy between their loss responses. Accordingly, $\mathrm{WSS}_i>0$ suggests that $u_i$ tends to bring the loss distributions of $\mathcal{T}_i$ and $\mathcal{T}_t$ closer relative to the frozen backbone, whereas little or no reduction may indicate weak response-level similarity.

{\em Comparison between projection-based and loss-distribution-based selections.}
Both selection criteria are effective, but they offer complementary advantages.
The projection-based criterion leverages prompt-gradient geometry to measure task correlation, capturing fine-grained update compatibility and enabling principled control of backward refinement (e.g., improved safety and stability).
This geometric view is particularly useful when reliable gradient information is available and stronger guarantees are desired.

In contrast, the loss-distribution-based criterion operates directly at the response level and relies only on scalar loss statistics, making it substantially lighter to store and maintain.
This simplicity becomes especially important in large-scale continual learning, where retaining task-specific gradient subspaces may incur nontrivial memory overhead as the number of tasks or the prompt dimensionality grows.
Together, these two criteria provide a practical trade-off between granularity and efficiency, and we find that each can be preferable depending on the deployment constraints.

\vspace{-0.2cm}
\subsection{Constrained Backward Prompt Updates}
Having identified prior prompts that are likely to yield \emph{positive backward transfer}, we next consider how to update them safely. The main challenge is to enable backward refinement using gradients from the current task, while avoiding changes along directions that were essential for the earlier task. 
For a selected prior prompt $u_i$, let $U_i \in \mathbb{R}^{(\ell d)\times r}$ denote an orthonormal basis spanning the task-$i$ gradient subspace identified during training on $\mathcal{T}_i$. Since this subspace captures directions that are most influential for $\mathcal{T}_i$, we treat it as a \emph{protected subspace}. 
Given the gradient of the current task $\mathcal{T}_t$ with respect to $u_i$, 
$g_{t\rightarrow i}=\nabla_{u_i}\mathcal{L}_t(u_i),$
we constrain backward refinement to the orthogonal complement of the protected subspace by projecting out the component aligned with $U_i$: 
$\Delta u_i^{\text{orth}} = (I - U_iU_i^\top)\, g_{t\rightarrow i}.$ 
We then perform backward refinement via 
$u_i \leftarrow u_i - \eta\, \Delta u_i^{\text{orth}}.$
This update exploits unused capacity in the prompt parameter space while largely preserving directions that are important for $\mathcal{T}_i$, reducing the risk of negative backward transfer.


{\em Rationality and comparison to other options.}
As illustrated in Figure~\ref{fig:SPR}, we compare our orthogonal backward refinement with several alternative update strategies for a selected prior prompt $u_i$.
Specifically, given the current-task gradient $g_{t\rightarrow i}$, we consider:
(i) the unconstrained update $\Delta u_i^{\text{unconstr}} = g_{t\rightarrow i}$,
(ii) the same-subspace update $\Delta u_i^{\text{same}} = U_iU_i^\top g_{t\rightarrow i}$,
and (iii) a hybrid interpolation
$\Delta u_i^{\text{hybrid}} = \alpha \Delta u_i^{\text{same}} + (1-\alpha)\Delta u_i^{\text{orth}}$ with $\alpha\in(0,1)$. 
A natural baseline is to directly apply current-task gradients without restriction.
However, Table~\ref{tab:subspace_ablation} shows that $\Delta u_i^{\text{unconstr}}$ often yields noisy and inconsistent backward transfer, suggesting substantial interference with task-$i$ knowledge.
Restricting updates to the protected subspace, i.e., using $\Delta u_i^{\text{same}}$, typically performs even worse, indicating that modifying directions that were critical for $\mathcal{T}_i$ can be particularly harmful.
While these directions encode prior-task information, uniformly perturbing them can overwrite finely tuned representations established during initial training.
Hybrid updates inherit the same issue, and remain unstable across tasks.

Taken together, these results motivate $\Delta u_i^{\text{orth}}$ as a {\bf simple yet effective mechanism} for backward refinement that minimizes interference with protected task-$i$ directions.

\begin{figure}[t]
    \centering
\includegraphics[width=0.41\textwidth]{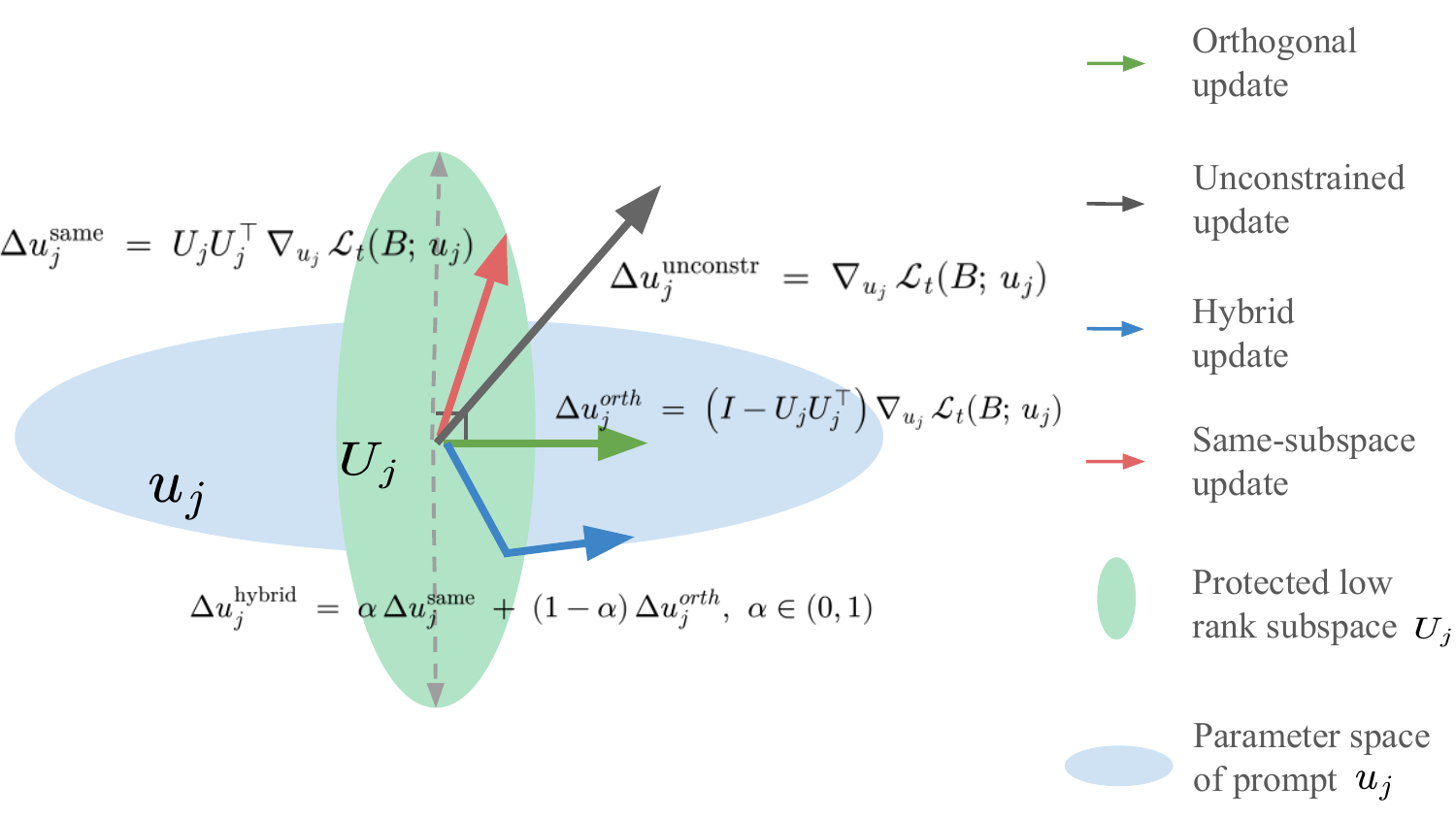}
    \caption{\small Geometry of different update strategies.}
    \label{fig:SPR}
\end{figure}

\begin{table}[h]
\centering
\scriptsize
\setlength{\tabcolsep}{5pt}
\renewcommand{\arraystretch}{1}
\caption{\small
Effect of backward update subspace choice on pairwise backward transfer.
$\Delta$Acc denotes the change in accuracy on the earlier task after backward
update during training on the later task. 
}
\begin{tabular}{l c c c c}
\toprule
\textbf{Pair $(T_i \leftarrow T_t)$}
& \textbf{Unconstr.}
& \textbf{Same-sub.}
& \textbf{Orthog.}
& \textbf{Hybrid} \\
\midrule
MNLI $\leftarrow$ CB
& $+0.001$
& $-0.003$
& \textbf{$+0.030$}
& $+0.000$ \\

QQP $\leftarrow$ MultiRC
& $+0.006$
& $+0.000$
& \textbf{$+0.008$}
& $+0.006$ \\

IMDb $\leftarrow$ Amazon
& \textbf{$+0.010$}
& $-0.002$
& $+0.020$
& $+0.008$ \\

Yelp $\leftarrow$ Amazon
& $+0.005$
& $-0.028$
& \textbf{$+0.001$}
& $-0.017$ \\

BoolQ $\leftarrow$ MultiRC
& $-0.040$
& $-0.050$
& \textbf{$-0.010$}
& $-0.048$ \\

COPA $\leftarrow$ BoolQ
& $+0.060$
& $+0.040$
& \textbf{$+0.080$}
& $+0.040$ \\
\midrule
\textbf{Avg. $\Delta$Acc}
& $+0.007$
& $-0.007$
& \textbf{$+0.0215$}
& $-0.002$ \\
\bottomrule
\end{tabular}

\label{tab:subspace_ablation}
\vspace{-0.4cm}

\end{table}

\noindent{\bf Cumulative protected subspace.}
While the above formulation describes a single backward refinement step, a prior prompt may be refined multiple times as new tasks arrive. To prevent successive refinements from revisiting or overwriting directions explored earlier, we maintain a cumulative protected subspace, denoted as $\tilde{U}_i^{(t)}\in\mathbb{R}^{(\ell d)\times r_i^{(t)}}$, for each prompt $u_i$, which is  
initialized as $\tilde{U}_i^{(i)} \leftarrow U_i$. At time $t$, we define the \emph{safe backward update} $\Delta u_i^{(t)}$ as the component of the current-task gradient that lies outside $\tilde{U}_i^{(t)}$, 
\begin{align}
\Delta u_i^{(t)} =
\bigl(I - \tilde{U}_i^{(t-1)}\tilde{U}_i^{(t-1)\top}\bigr)
\nabla_{u_i}\mathcal{L}_t(u_i).
\label{eq:protected_orthogonal_update}
\end{align} 
To ensure that newly explored directions are also protected from \emph{future} interference, we augment the protected subspace after each backward refinement step. Specifically, we first normalize the safe update direction as $\hat{v}_i^{(t)}=\Delta u_i^{(t)}/\|\Delta u_i^{(t)}\|_2$, and explicitly remove any residual component in the current protected subspace $\mathrm{span}(\tilde U_i^{(t-1)})$:
\begin{align}
v_i^{(t)}
= \hat{v}_i^{(t)}
- \tilde U_i^{(t-1)} \tilde U_i^{(t-1)\top} \hat{v}_i^{(t)}.
\label{eq:protected_orthogonal_component}
\end{align}
If $v_i^{(t)}\neq 0$, we then append it to the protected subspace via orthonormalization,
\begin{align}
\tilde U_i^{(t)} \leftarrow
\mathrm{orth}\!\Bigl(
\bigl[\tilde U_i^{(t-1)},\; v_i^{(t)}/\|v_i^{(t)}\|_2\bigr]
\Bigr),
\label{eq:protected_subspace_augmentation}
\end{align}
where $\orth(\cdot)$ denotes an orthonormalization operator that returns an
orthonormal basis spanning the same column space.
As a result, $\tilde U_i^{(t)}$ accumulates directions used during the original training of $\mathcal{T}_i$ as well as all subsequent backward refinements. This mechanism ensures that each refinement step continues to explore only previously unused directions, while remaining non-interfering over time, as shown by the following result, whose proof can be found in Appendix~\S\ref{app: prp1}.  

\begin{proposition}[Cumulative non-reuse and non-interference]
\label{prop:cumulative}
Fix a prior task $\mathcal{T}_i$ with prompt $u_i\in\mathbb{R}^{\ell\times d}$, and let $\tilde U_i^{(t)}\in\mathbb{R}^{(\ell d)\times r_i^{(t)}}$ denote an orthonormal basis of the protected subspace at time $t$, initialized as $\tilde U_i^{(i)}=U_i$. For any refinement time $t>i$, define the safe update 
$\Delta u_i^{(t)}=\bigl(I-\tilde U_i^{(t-1)}\tilde U_i^{(t-1)\top}\bigr)\nabla_{u_i}\mathcal{L}_t(u_i),$
and update the protected subspace as
$\tilde U_i^{(t)}\leftarrow \mathrm{orth}\!\bigl([\tilde U_i^{(t-1)},\, \Delta u_i^{(t)}/\|\Delta u_i^{(t)}\|_2]\bigr)$ when $\Delta u_i^{(t)}\neq 0$
(otherwise set $\tilde U_i^{(t)}=\tilde U_i^{(t-1)}$). For all $t\ge i$:

\vspace{-0.3cm}

\begin{enumerate}[leftmargin=*]
\item (\textbf{Cumulative span})
$\mathrm{span}(\tilde U_i^{(t)})$ contains $\mathrm{span}(U_i)$ and all previously appended refinement directions.

\vspace{-0.1cm}
\item (\textbf{Non-interference})
$\tilde U_i^{(t-1)\top}\Delta u_i^{(t)}=0$, i.e., each refinement update is orthogonal to all directions protected from original training and earlier refinements.
\vspace{-0.3cm}

\end{enumerate}
\end{proposition}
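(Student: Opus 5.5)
The argument is a direct induction on $t\ge i$, using only two elementary facts. First, $P^{(t)}:=\tilde U_i^{(t)}\tilde U_i^{(t)\top}$ is the orthogonal projector onto $\mathrm{span}(\tilde U_i^{(t)})$ whenever $\tilde U_i^{(t)}$ has orthonormal columns. Second, $\mathrm{orth}(\cdot)$ returns an orthonormal basis of the same column space.

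I will prove, by induction, the invariant that $\tilde U_i^{(t)}$ has orthonormal columns and that
\[
\mathrm{span}(\tilde U_i^{(t)})=\mathrm{span}(U_i)+\mathrm{span}\{\Delta u_i^{(s)}: i<s\le t,\ \Delta u_i^{(s)}\neq 0\}.
\]
The base case $t=i$ holds because $\tilde U_i^{(i)}=U_i$ is orthonormal by construction (the SVD basis) and the set of appended directions is empty. For the inductive step, assume the invariant holds at $t-1$. If $\Delta u_i^{(t)}=0$, the basis is unchanged and no direction is appended, so the invariant carries over. Otherwise, the column space of $[\tilde U_i^{(t-1)},\,\Delta u_i^{(t)}/\|\Delta u_i^{(t)}\|_2]$ equals $\mathrm{span}(\tilde U_i^{(t-1)})+\mathrm{span}\{\Delta u_i^{(t)}\}$, and $\mathrm{orth}$ preserves this span while returning orthonormal columns. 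This gives the invariant at $t$, and part~1 (cumulative span) follows immediately.

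For part~2, use the orthonormality from the invariant: $\tilde U_i^{(t-1)\top}\tilde U_i^{(t-1)}=I_{r_i^{(t-1)}}$. Hence
\[
\tilde U_i^{(t-1)\top}\Delta u_i^{(t)}=\bigl(\tilde U_i^{(t-1)\top}-\tilde U_i^{(t-1)\top}\tilde U_i^{(t-1)}\tilde U_i^{(t-1)\top}\bigr)\nabla_{u_i}\mathcal{L}_t(u_i)=0 .
\]
Combined with part~1, this means $\Delta u_i^{(t)}$ is orthogonal to $\mathrm{span}(U_i)$ and to every earlier appended direction. This is exactly the non-reuse/non-interference claim.

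As a side remark, the new column $\Delta u_i^{(t)}/\|\Delta u_i^{(t)}\|_2$ is already a unit vector orthogonal to the existing columns. So $\mathrm{orth}$ can simply be the identity here, the rank grows by exactly one, and this also shows that the residual-removal step \eqref{eq:protected_orthogonal_component} leaves the direction unchanged in exact arithmetic.

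The only delicate point, and the step I expect to need the most care, is making the orthonormality of $\tilde U_i^{(t-1)}$ part of the inductive hypothesis rather than assuming it silently. Without it, $\tilde U\tilde U^\top$ need not be a projector and part~2 fails. Bundling it into the invariant, as above, avoids this. One should also note that $\nabla_{u_i}\mathcal{L}_t$ is evaluated at the current (possibly already refined) $u_i$, but this is irrelevant because the argument never uses any property of the gradient vector.
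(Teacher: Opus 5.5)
Your proof is correct and follows the paper's argument: induction on $t$ for the cumulative span, using that $\mathrm{orth}$ preserves the column space, and then the direct computation $\tilde U^{\top}(I-\tilde U\tilde U^{\top})=\tilde U^{\top}-\tilde U^{\top}=0$, which uses orthonormality, for non-interference. Carrying orthonormality explicitly in the inductive invariant (the paper simply takes it from the hypothesis) and observing that the appended column is already orthogonal to the existing ones, so $\mathrm{orth}$ acts trivially, are small but valid additions.
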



\vspace{0.1cm}
\subsection{Prompt-Based Continual Learning with Selective Backward Refinement}

We now integrate the aforementioned components and propose {\bf S}elective b{\bf A}ckward refinement for positive {\bf B}ackward knowledge transf{\bf ER} (SABER) for prompt-based continual learning. The overall procedure is summarized in \Cref{alg:sbr_main}.   
Concretely, for each task $T_t$, we introduce a task-specific prompt $u_t$ and optimize it via standard gradient descent while keeping the backbone frozen. For the first task, SABER reduces to standard prompt tuning. For subsequent tasks $t \ge 2$, we identify a subset of prior tasks that are sufficiently correlated with the current task according to one of the following {\bf  two criteria} (C.1 or C.2):
\begin{equation*}
S_t =
\left\{
\begin{array}{l|l}
i \in \{1,\dots,t-1\} &
\begin{array}{l}
\text{C.1:}\;\; s_i \ge \tau_s \;\wedge\; c_i > 0 \\[2pt]
\text{C.2:}\;\; \mathrm{WSS}_{t,i} \ge \tau_{\mathrm{WSS}}
\end{array}
\end{array}
\right\}
\end{equation*}
where $s_i$ denotes the projection score, $c_i$ the gradient compatibility, and
$\mathrm{WSS}_{t,i}$ the Wasserstein similarity score.
Importantly, SABER uses \emph{fixed global thresholds} $\tau_s = 0.1$ and $\tau_{\mathrm{WSS}} = 0.2$ throughout all experiments; we find these values to be robust across datasets and model settings, requiring no per-benchmark tuning (see Appendix for details). Only prompts indexed by $S_t$ are refined, while all other prior prompts remain frozen during training of $T_t$. In practice, we first optimize the current-task prompt $u_t$, and perform backward refinement only after $u_t$ has sufficiently stabilized.

For each selected $i \in S_t$, the backward update is constrained to avoid interference with protected directions by enforcing $\widetilde U_i^{(t-1)\top}\,\Delta u_i^{(t)} = 0$. The learning objective for task $T_t$ can be summarized as
\begin{equation*}
\begin{aligned}
\min_{\{u_t\} \cup \{u_i\}_{i \in S_t}} \quad
& \mathcal{L}_t(u_t) \\
\text{s.t.} \quad
& \widetilde U_i^{(t-1)\top}\,\Delta u_i^{(t)} = 0,
\qquad \forall i \in S_t.
\end{aligned}
\end{equation*}
Overall, SABER enables controlled backward knowledge transfer to selected prior prompts without replay, while mitigating negative transfer by explicitly preventing updates along task-critical directions. The following result shows that our backward refinement is both safe and effective.

\begin{proposition}[Monotone improvement under $K$ safe refinement steps]
\label{Proposition3.2}
Fix a prior prompt $u_i$ at time $t$, and let $\tilde U := \tilde U_i^{(t-1)}$ be a matrix with orthonormal columns. Define $f(u) := \mathcal{L}_t(u)$ and the \emph{safe (projected) gradient} 
$\Delta(u) := (I-\tilde U\tilde U^\top)\nabla f(u),$ 
which removes all components along the protected subspace $\mathrm{span}(\tilde U)$.
Starting from $u^{(0)} = u_i$, we perform $K$ refinement steps
$u^{(k+1)} = u^{(k)} - \eta\,\Delta\!\left(u^{(k)}\right)$ for $k=0,\dots,K-1$.
Assume that $f$ is $L$-smooth in $u$ and the stepsize satisfies $\eta \le 1/L$.
Then for any $K\ge 1$,
\begin{align*}
f\!\big(u^{(0)}\big) - f\!\big(u^{(K)}\big)
\;\ge\;
\frac{\eta}{2}\sum_{k=0}^{K-1}\left\|\Delta\!\left(u^{(k)}\right)\right\|_2^2,
\end{align*}
and therefore $f\!\left(u^{(K)}\right) \le f\!\left(u^{(0)}\right)$, with strict inequality whenever $\Delta\!\left(u^{(k)}\right)\neq 0$ for some $k<K$.
In other words, each safe refinement step is guaranteed to be non-increasing on the current-task loss, while never updating directions inside the protected subspace.

Moreover, $\Delta\!\left(u^{(0)}\right)=0$ if and only if $\nabla f\!\left(u^{(0)}\right)\in \mathrm{span}(\tilde U)$.
In this case, $\langle \nabla f\!\left(u^{(0)}\right), d\rangle = 0$ for every direction $d$ satisfying $\tilde U^\top d = 0$, i.e., there is no first-order descent direction that preserves the protected subspace.
\end{proposition}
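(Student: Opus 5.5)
The plan is to run the standard descent-lemma argument for gradient descent, using one fact about the projector $P := I-\tilde U\tilde U^\top$: it is symmetric and idempotent ($P^\top = P$, $P^2=P$) because $\tilde U$ has orthonormal columns. With $\Delta(u) = P\nabla f(u)$, this gives
\[
\langle \nabla f(u), \Delta(u)\rangle = \nabla f(u)^\top P \nabla f(u) = \|P\nabla f(u)\|_2^2 = \|\Delta(u)\|_2^2 .
\]
So the safe direction behaves, to first order, exactly like the gradient: the inner product with the true gradient equals the squared norm of the step.

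\textbf{Step 1 (one step).} Apply $L$-smoothness of $f$ at $u^{(k)}$ with displacement $-\eta\Delta(u^{(k)})$:
\[
f(u^{(k+1)}) \le f(u^{(k)}) - \eta\langle \nabla f(u^{(k)}),\Delta(u^{(k)})\rangle + \tfrac{L\eta^2}{2}\|\Delta(u^{(k)})\|_2^2 .
\]
By the identity above, the inner product equals $\|\Delta(u^{(k)})\|_2^2$. Hence the decrease is at least $\eta(1-L\eta/2)\|\Delta(u^{(k)})\|_2^2$. Since $\eta\le 1/L$, we have $1-L\eta/2\ge 1/2$, so the decrease is at least $\tfrac{\eta}{2}\|\Delta(u^{(k)})\|_2^2$.

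\textbf{Step 2 (telescoping).} Sum the one-step bound over $k=0,\dots,K-1$; the left-hand side telescopes to $f(u^{(0)})-f(u^{(K)})$, which gives the claimed inequality. Non-increase follows because every term on the right is nonnegative. Strictness follows because any nonzero $\Delta(u^{(k)})$ contributes a positive term, provided $\eta>0$ (implicitly assumed).

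\textbf{Step 3 (never touching protected directions).} Each increment $u^{(k+1)}-u^{(k)} = -\eta P\nabla f(u^{(k)})$ lies in $\operatorname{range}(P)=\mathrm{span}(\tilde U)^\perp$. Indeed, $\tilde U^\top P = \tilde U^\top - (\tilde U^\top\tilde U)\tilde U^\top = 0$ since $\tilde U^\top\tilde U = I$. The same computation is the non-interference item of Proposition~\ref{prop:cumulative}.

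\textbf{Step 4 (the equivalence and its consequence).} We have $P g=0$ iff $g=\tilde U\tilde U^\top g$ iff $g\in\mathrm{span}(\tilde U)$, because $P$ is the orthogonal projector onto $\mathrm{span}(\tilde U)^\perp$. In that case write $g=\tilde U a$. Then for any $d$ with $\tilde U^\top d=0$,
\[
\langle g,d\rangle = a^\top \tilde U^\top d = 0,
\]
so no first-order descent direction preserves the protected subspace.

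There is no real obstacle here. The only point requiring care is the projector identity that turns the cross term into $\|\Delta\|^2$. I will also note that the smoothness assumption is needed along the segments between iterates; global $L$-smoothness, as stated in the proposition, covers this.
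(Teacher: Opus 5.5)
Your proof is correct and matches the paper's argument step for step: the projector identity $\langle \nabla f, \Delta\rangle = \|\Delta\|_2^2$, the descent lemma with $\eta\le 1/L$, telescoping, and the symmetry of the projector for the final equivalence (the paper writes $P$ for $\tilde U\tilde U^\top$ rather than for its complement, which is only a notational difference). Your explicit check that each increment lies in $\mathrm{span}(\tilde U)^\perp$, and your note that strictness needs $\eta>0$, are small additions that the paper leaves implicit.
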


\begin{algorithm}[t]
\footnotesize 
\caption{ Selective Backward Refinement (SABER)}
\label{alg:sbr_main}
\begin{algorithmic}[1]
\Require Tasks $\{(\mathcal{T}_t,\mathcal{D}_t)\}_{t=1}^T$, frozen LM $f(\cdot;\theta)$, rank $r$, thresholds $\tau_s$ or $\tau_{\mathrm{WSS}}$
\Output Prompts $\{u_t\}_{t=1}^T$ and protected bases $\{\tilde U_t\}_{t=1}^T$
\State $\mathcal{U}_0 \gets \emptyset$
\For{$t=1$ \textbf{to} $T$}
    \If{$t>1$}
        \State Select correlated indices $I_t \subseteq \{1,\dots,t-1\}$ using
        projection / loss-distribution criterion \Comment{Eq.~\ref{eq:projection_alignment},~\ref{eq:gradient_direction},~\ref{eq:wasserstein_separation}}
    \Else
        \State $I_t \gets \emptyset$
    \EndIf
    \State Train current prompt $u_t$ on $\mathcal{D}_t$ with $\theta$ frozen
    \State $U_t \gets \mathrm{GradSubspaceSVD}(\mathcal{D}_t,u_t,r)$ \Comment{Appendix~\ref{app:svd_gradient_subspace}}
    \State $\tilde U_t \gets U_t$; \ $\mathcal{U}_t \gets \mathcal{U}_{t-1}\cup\{u_t\}$
    \For{each $i \in I_t$}
        \For{$k=1$ \textbf{to} $K$} \Comment{few safe refinement steps}
            \State $\Delta u_i \gets (I-\tilde U_i\tilde U_i^\top)\nabla_{u_i}L_t(u_i)$ \Comment{Eq.~\ref{eq:protected_orthogonal_update}}
            \State $u_i \gets u_i - \eta\,\Delta u_i$
        \EndFor
        \State $\tilde U_i \gets \mathrm{orth}(\tilde U_i,\Delta u_i)$ \Comment{Eq.~\ref{eq:protected_subspace_augmentation}}
    \EndFor
\EndFor
\end{algorithmic}
\end{algorithm}




\begin{table*}[h]
\centering
\scriptsize
\setlength{\tabcolsep}{6pt}
\renewcommand{\arraystretch}{1.15}
\caption{Results on Long Sequence and SuperNI for \texttt{T5-Large}.
Higher is better for both metrics.
Positive BWT indicates improvement on earlier tasks after learning subsequent tasks.
\textbf{Bold} indicates the best result and \underline{underline} indicates the second best.}
\begin{tabular}{l cc cc cc cc}
\toprule
\multirow{3}{*}{\textbf{Method}}
& \multicolumn{4}{c}{\textbf{Long Sequence}}
& \multicolumn{4}{c}{\textbf{SuperNI}} \\
\cmidrule(lr){2-5}\cmidrule(lr){6-9}

& \multicolumn{2}{c}{\textbf{Order1}}
& \multicolumn{2}{c}{\textbf{Order2}}
& \multicolumn{2}{c}{\textbf{Order1}}
& \multicolumn{2}{c}{\textbf{Order2}} \\
\cmidrule(lr){2-3}\cmidrule(lr){4-5}\cmidrule(lr){6-7}\cmidrule(lr){8-9}

& \textbf{AP}$\uparrow$ & \textbf{BWT}$\uparrow$
& \textbf{AP}$\uparrow$ & \textbf{BWT}$\uparrow$
& \textbf{AP}$\uparrow$ & \textbf{BWT}$\uparrow$
& \textbf{AP}$\uparrow$ & \textbf{BWT}$\uparrow$ \\
\midrule

Replay
& 56.20$\pm$0.74 & -12.18$\pm$0.63
& 53.50$\pm$0.81 & -17.18$\pm$0.72
& 33.65$\pm$0.68 & -14.20$\pm$0.59
& 35.70$\pm$0.77 & -16.42$\pm$0.66 \\

L2P
& 60.43$\pm$0.61 & -1.43$\pm$0.34
& 59.87$\pm$0.58 & -0.83$\pm$0.29
& 14.32$\pm$0.52 & -1.54$\pm$0.31
& 13.84$\pm$0.49 & -1.65$\pm$0.35 \\

LFPT5
& 69.35$\pm$0.66 & -0.76$\pm$0.27
& 67.64$\pm$0.71 & -0.53$\pm$0.22
& 36.43$\pm$0.73 & -0.32$\pm$0.19
& 35.98$\pm$0.69 & -0.43$\pm$0.21 \\

ProgPrompt
& 75.43$\pm$0.55 & -0.12$\pm$0.31
& 74.30$\pm$0.62 & -0.37$\pm$0.26
& 38.84$\pm$0.64 & -0.27$\pm$0.28
& 39.32$\pm$0.59 & -0.21$\pm$0.33 \\

CODA Prompt
& 76.21$\pm$0.60 & -0.54$\pm$0.25
& 75.95$\pm$0.58 & -0.65$\pm$0.27
& 42.34$\pm$0.57 & -0.98$\pm$0.30
& 42.76$\pm$0.61 & -0.87$\pm$0.29 \\

SHLPT
& 77.40$\pm$0.53 & -0.29$\pm$0.28
& 76.87$\pm$0.56 & -0.34$\pm$0.26
& 43.87$\pm$0.59 & -0.11$\pm$0.27
& 44.32$\pm$0.62 & -0.20$\pm$0.24 \\

SAPT
& 78.14$\pm$0.48 & -0.45$\pm$0.21
& 77.54$\pm$0.51 & -0.65$\pm$0.24
& 43.76$\pm$0.55 & -0.54$\pm$0.22
& 41.21$\pm$0.58 & -0.98$\pm$0.29 \\

\midrule
\rowcolor{oursblue}

\textbf{FPP + SABER-P}
& 80.46$\pm$0.44 & \underline{1.76}$\pm$0.18
& \textbf{80.21}$\pm$0.47 & \underline{2.15}$\pm$0.21
& \underline{45.50}$\pm$0.49 & \underline{1.86}$\pm$0.20
& \textbf{47.43}$\pm$0.52 & \underline{2.32}$\pm$0.23 \\
\rowcolor{oursblue}

\textbf{FPP + SABER-L}
& \underline{80.12}$\pm$0.54 & \textbf{2.13}$\pm$1.18
& \underline{79.94}$\pm$0.49 & \textbf{3.12}$\pm$0.31
& \textbf{46.08}$\pm$0.49 & \textbf{1.90}$\pm$0.50
& 45.75$\pm$1.52 & \textbf{2.88}$\pm$0.12 \\

\midrule
\rowcolor{oursblue}

\textbf{SPA + SABER-P}
& \textbf{80.84}$\pm$0.46 & 0.93$\pm$0.19
& 78.68$\pm$0.50 & 1.29$\pm$0.22
& 45.21$\pm$0.48 & 1.22$\pm$0.21
& 45.13$\pm$0.51 & 1.45$\pm$0.24 \\

\rowcolor{oursblue}

\textbf{SPA + SABER-L}
& \underline{80.67}$\pm$1.47 & 0.87$\pm$0.36
& 78.17$\pm$1.34 & 1.65$\pm$0.92
& 44.28$\pm$0.23 & 1.12$\pm$0.45
& \underline{46.12}$\pm$0.48 & 1.14$\pm$0.33 \\

\bottomrule
\end{tabular}

\label{tab:main_ap_bwt}
\end{table*}

\vspace{-0.3cm}

\section{Experiment}

\subsection{Experiment setups}

\noindent{\bf Dataset and Metrics.}

We evaluate our method on two standard task-incremental continual learning benchmarks,
\textsc{SuperNI}~\cite{wang2022super} and \textsc{Long Sequence}~\cite{razdaibiedina2023progressive},
each comprising 15 sequential tasks and evaluated under two task orders following prior work~\cite{razdaibiedina2023progressive,zhao2024sapt}. Let $\mathrm{Acc}_{\mathcal{T}_t,\mathcal{T}_k}$ denote the test performance on task
$\mathcal{T}_k$ after training on task $\mathcal{T}_t$, measured by accuracy for classification
tasks and ROUGE-L~\cite{chin2004rouge} otherwise. We report three standard continual learning
metrics: \emph{Average Performance} (AP)~\cite{chaudhry2018riemannian},
$\mathrm{AP}=\frac{1}{T}\sum_{t=1}^{T}\mathrm{Acc}_{\mathcal{T}_T,\mathcal{T}_t}$;
\emph{Forward Transfer} (FWT)~\cite{lopez2017gradient}; and
\emph{Backward Transfer} (BWT)~\cite{ke2022continual}. Further dataset details are provided in Appendix~\S\ref{dataset:append}.

\begin{table*}[h]
\centering
\scriptsize
\setlength{\tabcolsep}{6pt}
\renewcommand{\arraystretch}{1.15}

\caption{Results on Long Sequence and SuperNI for \texttt{LLaMA-2-7B}.
Higher is better for both metrics.
Positive BWT indicates improvement on earlier tasks after learning subsequent tasks.
\textbf{Bold} indicates the best result and \underline{underline} indicates the second best.}
\begin{tabular}{l cc cc cc cc}
\toprule
\multirow{3}{*}{\textbf{Method}}
& \multicolumn{4}{c}{\textbf{Long Sequence}}
& \multicolumn{4}{c}{\textbf{SuperNI}} \\
\cmidrule(lr){2-5}\cmidrule(lr){6-9}

& \multicolumn{2}{c}{\textbf{Order1}}
& \multicolumn{2}{c}{\textbf{Order2}}
& \multicolumn{2}{c}{\textbf{Order1}}
& \multicolumn{2}{c}{\textbf{Order2}} \\
\cmidrule(lr){2-3}\cmidrule(lr){4-5}\cmidrule(lr){6-7}\cmidrule(lr){8-9}

& \textbf{AP}$\uparrow$ & \textbf{BWT}$\uparrow$
& \textbf{AP}$\uparrow$ & \textbf{BWT}$\uparrow$
& \textbf{AP}$\uparrow$ & \textbf{BWT}$\uparrow$
& \textbf{AP}$\uparrow$ & \textbf{BWT}$\uparrow$ \\
\midrule

Replay
& 60.32$\pm$0.71 & -19.54$\pm$0.83
& 59.98$\pm$0.68 & -21.09$\pm$0.77
& 37.48$\pm$0.62 & -21.47$\pm$0.69
& 40.23$\pm$0.74 & -23.32$\pm$0.81 \\

LFPT
& 72.65$\pm$0.54 & -0.94$\pm$0.29
& 71.87$\pm$0.61 & -0.56$\pm$0.24
& 38.88$\pm$0.57 & -0.98$\pm$0.31
& 38.95$\pm$0.63 & -0.67$\pm$0.26 \\

ProgPrompt
& 78.98$\pm$0.49 & -0.18$\pm$0.33
& 77.33$\pm$0.52 & -0.12$\pm$0.21
& 40.65$\pm$0.55 & -0.26$\pm$0.28
& 43.98$\pm$0.58 & -0.17$\pm$0.30 \\

SHLPT
& 79.40$\pm$0.47 & -0.27$\pm$0.27
& 80.65$\pm$0.50 & -0.15$\pm$0.23
& 44.97$\pm$0.53 & -0.45$\pm$0.25
& 46.97$\pm$0.56 & -0.33$\pm$0.22 \\

SAPT
& 78.43$\pm$0.51 & -0.86$\pm$0.35
& 78.98$\pm$0.48 & -0.85$\pm$0.29
& 46.98$\pm$0.59 & -0.75$\pm$0.24
& 47.42$\pm$0.61 & -0.43$\pm$0.21 \\

\midrule
\rowcolor{oursblue}

\textbf{FPP + SABER-P}
& \textbf{82.87}$\pm$0.44 & \underline{1.56}$\pm$0.18
& \underline{83.98}$\pm$0.46 & 1.95$\pm$0.21
& 48.65$\pm$0.49 & 2.13$\pm$0.20
& \textbf{49.26}$\pm$0.52 & \textbf{1.98}$\pm$0.22 \\

\rowcolor{oursblue}

\textbf{FPP + SABER-L}
& 82.23$\pm$0.53 & 1.12$\pm$0.24
& \textbf{84.86}$\pm$0.51 & \textbf{2.11}$\pm$0.26
& 47.12$\pm$0.58 & 1.90$\pm$0.31
& 47.10$\pm$0.60 & 0.12$\pm$0.19 \\

\midrule
\rowcolor{oursblue}

\textbf{SPA + SABER-P}
& 81.47$\pm$0.48 & 1.39$\pm$0.22
& 79.84$\pm$0.50 & \underline{1.94}$\pm$0.27
& \textbf{49.48}$\pm$0.46 & \underline{2.18}$\pm$0.24
& \underline{47.65}$\pm$0.55 & 0.97$\pm$0.28 \\
\rowcolor{oursblue}

\textbf{SPA + SABER-L}
& \underline{82.65}$\pm$0.45 & \textbf{1.87}$\pm$0.21
& 79.34$\pm$0.52 & 1.33$\pm$0.25
& \underline{49.21}$\pm$0.50 & \textbf{2.75}$\pm$0.29
& 46.87$\pm$0.57 & \underline{1.13}$\pm$0.26 \\

\bottomrule
\end{tabular}

\label{tab:main_ap_bwt_llama}
\end{table*}

\noindent{\bf Comparison Baselines and Training Details.}
We compare our method against seven continual learning baselines,
including Replay~\cite{de2019episodic}, L2P~\cite{wang2022learning},
LFPT5~\cite{qinlfpt5}, ProgPrompt~\cite{razdaibiedina2023progressive},
CODA~\cite{smith2023coda}, SHLPT~\cite{wu2024mitigate}, and
SAPT~\cite{zhao2024sapt}.
All methods are evaluated under a task-incremental setting using the same
pretrained backbone models for fairness.
Our method is a model-agnostic continual learning framework compatible with
transformer-based pretrained models; in our experiments, we use diverse models such as \texttt{T5}, \texttt{Qwen}, and \texttt{LLaMA}. We report two variants of our method: SABER-P, which uses
projection-based task correlation, and SABER-L, which uses
loss-distribution-based task correlation. 
Additional implementation details and hyperparameter settings are provided in
Appendix~\S\ref{add:traningdetails}.

\noindent{\bf Frameworks.}
We evaluate our method under two major prompt-based continual learning frameworks that differ in how prior task knowledge is used during training of the current task.
In the \emph{Frozen Prompt Pool (FPP) Framework}, each task $T_t$ has a task-specific prompt $u_t$, and all previously learned prompts $\mathcal{U}_{t-1}$ are used during training to support forward transfer; at task $t$, the current prompt $u_t$ and a selected subset of prior prompts are trainable for backward refinement, while the remaining prompts are kept frozen.
In contrast, the \emph{Shared Prompt Augmentation (SPA) Framework} supports forward transfer through a shared prompt $P$ with larger capacity that is updated across tasks; at task $t$, training uses only the shared prompt $P$, the current prompt $u_t$, and selected prior prompts for backward refinement, all updated with distinct learning rates.

\subsection{Main Results}
Tables~\ref{tab:main_ap_bwt} and~\ref{tab:main_ap_bwt_llama} report results on the Long Sequence and SuperNI benchmarks. Across all settings, SABER consistently outperforms strong prompt-based continual learning baselines, improving average performance and, importantly, achieving positive backward transfer. In contrast, all competing methods exhibit negative BWT, indicating performance degradation on earlier tasks. SABER is evaluated under both the Frozen Prompt Pool (FPP) and Shared Prompt Augmentation (SPA) frameworks, achieving the best BWT in both cases. FPP generally yields larger backward transfer gains, while SPA maintains consistently positive BWT with competitive AP. Both correlation variants, \textbf{SABER-P} and \textbf{SABER-L}, show consistent improvements, suggesting that backward transfer can be enabled using either update-level or response-level task correlation signals. Figure~\ref{fig:BWT} and Figure~\ref{fig:final_accuracy} provides a task-level view, illustrating accuracy improvements induced by backward refinement.


\begin{table}[h]
\centering
\caption{
Performance on \texttt{Qwen-3-4B} on the Long Sequence benchmark. We report only the strongest ProgPrompt and SHLPT baselines and our  projection-based SABER version for brevity.
}
\tiny
\setlength{\tabcolsep}{6pt}
\renewcommand{\arraystretch}{1.15}

\begin{tabular}{l cc cc}
\toprule
\multirow{2}{*}{\textbf{Method}}
& \multicolumn{4}{c}{\textbf{Long Sequence}} \\
\cmidrule(lr){2-5}

& \multicolumn{2}{c}{\textbf{Order 1}}
& \multicolumn{2}{c}{\textbf{Order 2}} \\
\cmidrule(lr){2-3} \cmidrule(lr){4-5}

& AP & BWT & AP & BWT \\
\midrule

ProgPrompt
& 79.70$\pm$0.42 & -0.18$\pm$0.19
& 78.32$\pm$0.47 & -0.28$\pm$0.17 \\

SHLPT
& 80.95$\pm$0.38 & -0.27$\pm$0.21
& 80.54$\pm$0.41 & -0.23$\pm$0.18 \\

\midrule
\rowcolor{oursblue}

\textbf{FPP + SABER-P}
& \textbf{83.98$\pm$0.36} & \textbf{2.14$\pm$0.22}
& \textbf{83.32$\pm$0.44} & \underline{1.96$\pm$0.25} \\

\rowcolor{oursblue}

\textbf{SPA + SABER-P}
& \underline{81.97$\pm$0.40} & \underline{1.87$\pm$0.24}
& \underline{82.68$\pm$0.39} & \textbf{2.12$\pm$0.27} \\

\bottomrule
\end{tabular}

\label{tab:qwen_long_sequence}
\end{table}

\noindent{\bf Generalization and Scalability.} Table~\ref{tab:main_ap_bwt_llama} shows that these trends persist when scaling from \texttt{T5-Large} (0.77B) to \texttt{LLaMA-2-7B}, where SABER again achieves the strongest AP and is the only method with consistently positive BWT. Table~\ref{tab:qwen_long_sequence} further extends these results to \texttt{Qwen-3-4B}, where SABER yields larger backward transfer gains while maintaining strong overall performance.  

\begin{figure}[htbp]
\centering

\includegraphics[width=0.45
\textwidth]{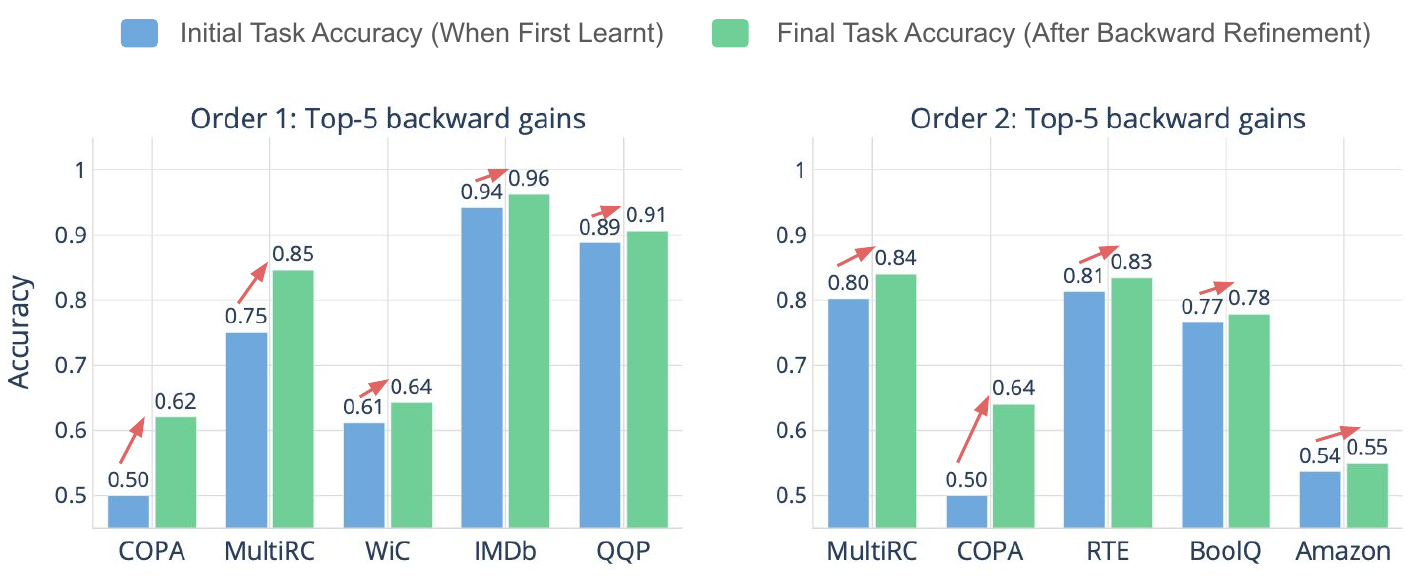}

\caption{Task-level backward refinement under FPP on the Long Sequence benchmark. Bars show initial accuracy when each task is first learned and final accuracy after completing the task sequence; arrows indicate accuracy gains from backward refinement.}

    \label{fig:BWT}
    \vspace{-0.15cm}
\end{figure}





\begin{figure}[htbp]
\centering

\includegraphics[width=0.49\textwidth]{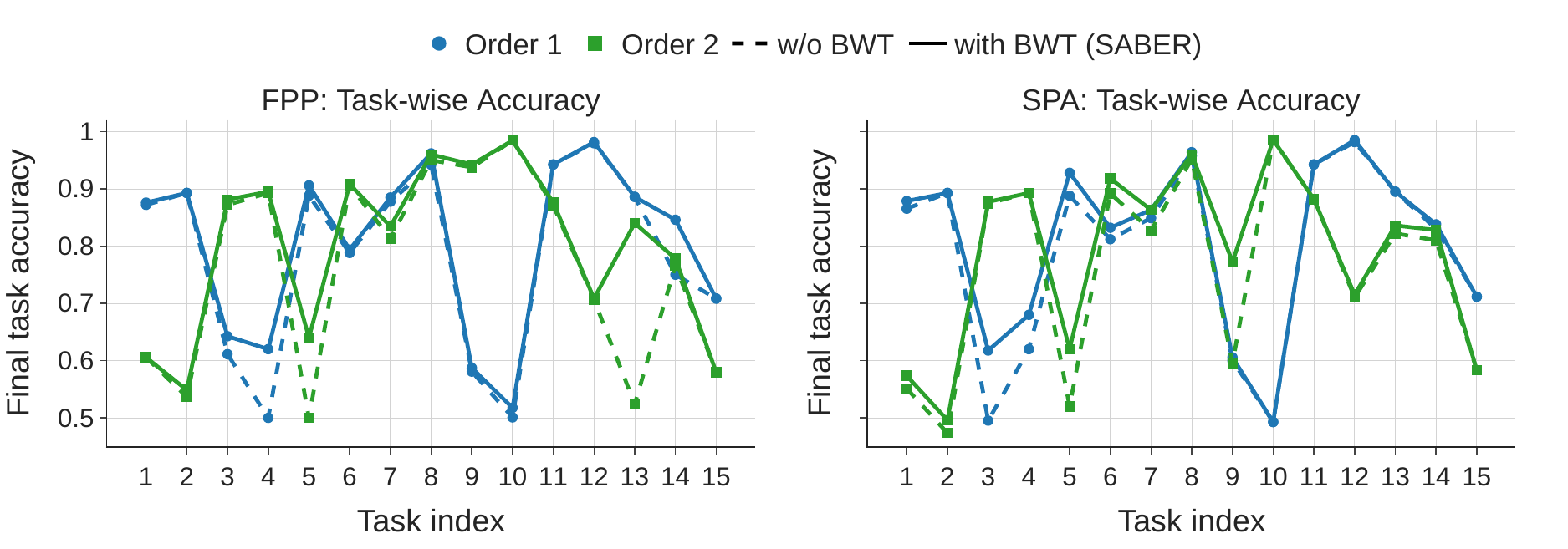}

\caption{Task-wise final accuracy on the Long Sequence benchmark with (solid line) and without (dashed line) backward refinement. Results are shown for FPP + SABER-P (left) and SPA + SABER-L (right).}

    \label{fig:final_accuracy}

\end{figure}


\subsection{Runtime and Computational Overhead}
\label{sec:runtime_overhead}

To assess the practical cost of \textsc{SABER}, we report total wall-clock 
training time over the full 15-task sequence on a single NVIDIA A100 80GB GPU. 
As shown in Table~\ref{tab:runtime}, \textsc{SABER} adds moderate overhead 
relative to the corresponding prompt-based baseline: about $34\%$ for T5-Large 
and $44\%$ for LLaMA-2-7B, while being the only method to achieve consistently 
positive BWT. This overhead comes from task-correlation estimation and safe 
backward refinement of selected prior prompts. The refinement cost scales as 
$\mathcal{O}(|\mathcal{S}_t|K\ell d r)$, where $|\mathcal{S}_t|$ is the number 
of selected prompts, $K$ the number of refinement steps, $\ell d$ the prompt 
dimension, and $r$ the protected-subspace rank.

\begin{table}[t]
\centering
\small
\setlength{\tabcolsep}{5pt}
\renewcommand{\arraystretch}{1.08}
\caption{Runtime overhead of SABER over the full task sequence. We report total GPU time and additional overhead relative to the corresponding standard prompt-based baseline.}
\label{tab:runtime_overhead}
\begin{tabular}{llcc}
\toprule
\textbf{Backbone} & \textbf{Method} & \textbf{Total GPU Time} & \textbf{Overhead} \\
 &  & \textbf{(h:m)} &  \\
\midrule
T5-Large & SABER-P & 24:09 & +06:12 \\
T5-Large & SABER-L & 26:00 & +08:03 \\
LLaMA-2-7B & SABER-P & 105:04 & +32:00 \\
LLaMA-2-7B & SABER-L & 100:08 & +27:04 \\
\bottomrule
\label{tab:runtime}
\end{tabular}
\end{table}

The two variants offer different efficiency trade-offs. \textsc{SABER}-P uses 
gradient subspaces, providing finer geometric control but incurring memory growth 
with the number of tasks. In contrast, \textsc{SABER}-L uses scalar loss 
statistics, reducing memory overhead by up to $2{,}340$ KB at 15 tasks 
(Figure~\ref{fig:memory}, Appendix~\S Table~\ref{tab:memory_scaling_tasks}) while adding only modest extra compute. Thus, 
\textsc{SABER}-L is more memory-efficient for large-scale deployment, whereas 
\textsc{SABER}-P is preferable when stronger geometric control is desired. A summary of all quantities stored per prior task is provided 
in Appendix~\S Table~\ref{tab:store}.


\begin{figure}[htbp]
\centering

\includegraphics[width=0.35
\textwidth]{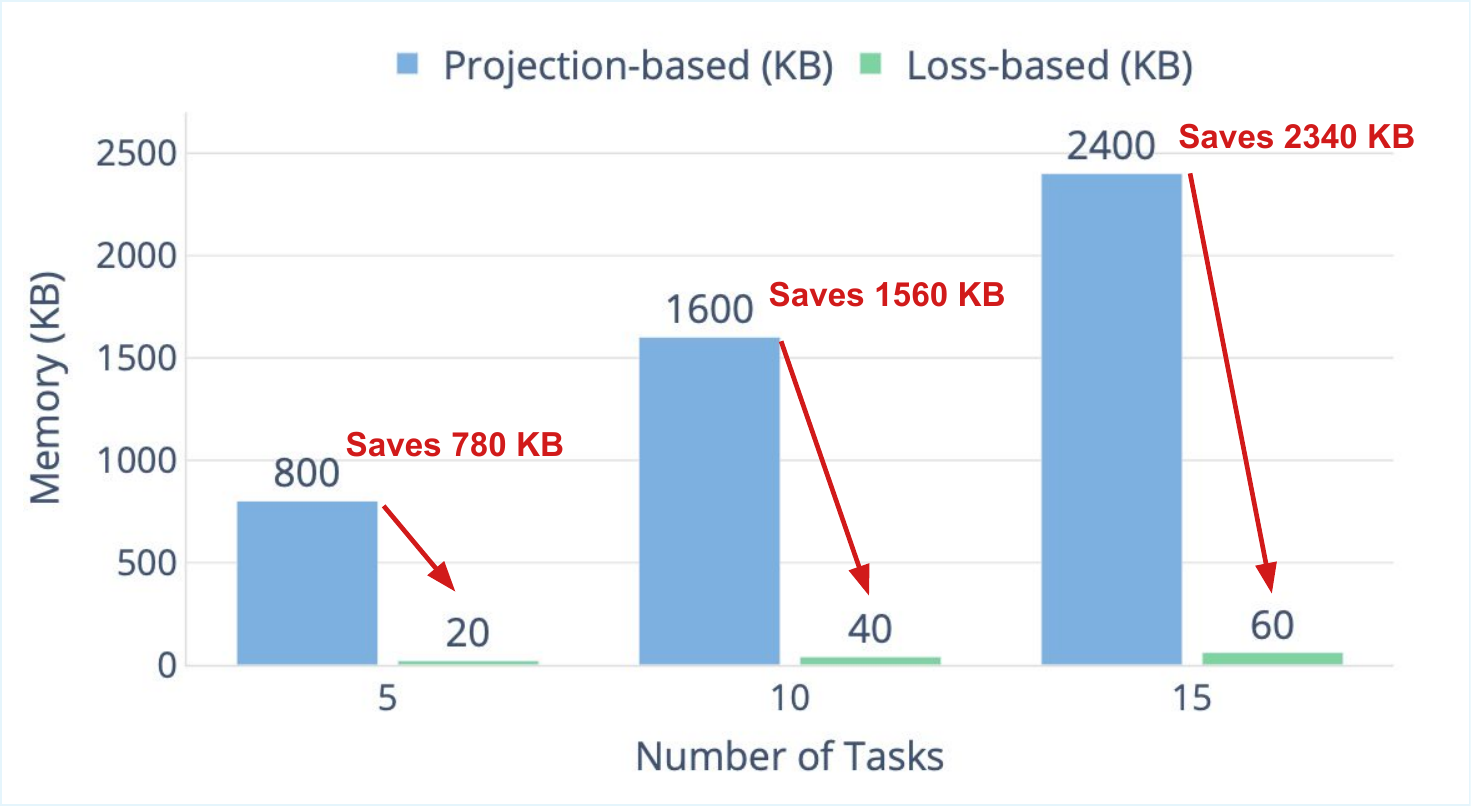}

\caption{Memory overhead comparison between projection-based and loss-distribution-based task correlation.
}

    \label{fig:memory}
\end{figure}

\subsection{Ablation Study}
Table~\ref{tab:ablation_saber} presents an ablation study on Long Sequence and SuperNI. Removing \emph{task correlation} causes backward transfer to collapse, yielding negative BWT, which shows that indiscriminate backward updates are often harmful. Removing \emph{constrained backward refinement} also degrades performance, particularly BWT, indicating that correlated tasks alone are insufficient without constrained updates. Removing both components leads to the worst performance, with the lowest AP and most negative BWT. Overall, task correlation and constrained backward refinement are complementary and jointly necessary for stable backward transfer. 

\begin{table}[h]
\centering
\caption{Ablation study of SABER.
}
\tiny
\setlength{\tabcolsep}{7pt}
\renewcommand{\arraystretch}{1.15}
\begin{tabular}{l l cc cc}
\toprule
\textbf{Model} & \textbf{Variant}
& \multicolumn{2}{c}{\textbf{Long Sequence}}
& \multicolumn{2}{c}{\textbf{SuperNI}} \\
\cmidrule(lr){3-4} \cmidrule(lr){5-6}

& 
& AP & BWT
& AP & BWT \\
\midrule

\multirow{4}{*}{T5-Large}
& (0) \textbf{SABER} & \bfseries 80.33 & \bfseries 1.95 & \bfseries 42.48 & \bfseries 2.09 \\
& (1) w/o task correlation & 77.59 & -1.78 & 40.93 & -0.84 \\
& (2) w/o constrained refinement & 78.84 & -0.75 & 41.33 & -0.43 \\
& (3) w/o all & 77.02 & -1.98 & 39.28 & -1.35 \\
\midrule

\multirow{4}{*}{LLaMA}
& (0) \textbf{SABER} & \bfseries 83.42 & \bfseries 1.75 & \bfseries 48.95 & \bfseries 2.05 \\
& (1) w/o task correlation & 81.26 & -1.38 & 45.84 & -0.89 \\
& (2) w/o constrained refinement & 80.54 & -0.87 & 46.98 & -0.74 \\
& (3) w/o all & 79.84 & -1.82 & 45.12 & -1.16 \\
\bottomrule
\end{tabular}

\label{tab:ablation_saber}
\end{table}

\section{Conclusion}
We introduced SABER, a replay-free framework that enables selective and safe backward knowledge transfer in prompt-based continual learning. By explicitly identifying when backward refinement is beneficial and constraining updates to non-interfering prompt directions, SABER achieves consistent positive backward transfer while preserving strong overall performance. Our theoretical analysis guarantees interference-free refinements under mild conditions, and extensive experiments across models and benchmarks validate the effectiveness and efficiency of the approach. 
Future work includes extending SABER to more complex continual learning settings such as longer task sequences and mixed task similarities, exploring richer task-correlation measures beyond gradients and loss distributions, and integrating selective backward refinement with other parameter-efficient adaptation strategies.

\newpage

\section*{Acknowledgements}
The work of A. Tiwari and K. Ji was partially supported by NSF grants CCF-2311274, ECCS-2326592 and CAREER-2442418.

\bibliography{example_paper}
\bibliographystyle{icml2026}

\newpage
\appendix
\onecolumn

{\Large \bf Appendix}


\section{Proposition}\label{app: prp1}


\noindent{\bf Proof of Proposition 4.1}
\begin{proof}
\textbf{Part (1).} We prove the cumulative span claim by induction on $t$.
For $t=i$, $\tilde U_i^{(i)}=U_i$, hence
$\mathrm{span}(\tilde U_i^{(i)})=\mathrm{span}(U_i)$.
Assume the claim holds at time $t-1$.
If $\Delta u_i^{(t)}=0$, then $\tilde U_i^{(t)}=\tilde U_i^{(t-1)}$ and the claim
follows immediately.
Otherwise, $\Delta u_i^{(t)}\neq 0$ and
$\tilde U_i^{(t)}=\mathrm{orth}([\tilde U_i^{(t-1)},\Delta u_i^{(t)}/\|\Delta u_i^{(t)}\|_2])$.
By definition of $\mathrm{orth}(\cdot)$, it returns an orthonormal basis spanning
the same column space, so
\[
\mathrm{span}(\tilde U_i^{(t)})
=\mathrm{span}(\tilde U_i^{(t-1)}) + \mathrm{span}\!\left(\Delta u_i^{(t)}\right).
\]
Thus $\mathrm{span}(\tilde U_i^{(t)})$ contains $\mathrm{span}(\tilde U_i^{(t-1)})$
and the newly appended refinement direction; by the induction hypothesis, it
contains $\mathrm{span}(U_i)$ and all previously appended directions.

\textbf{Part (2).} Let $g_i^{(t)}=\nabla_{u_i}\mathcal L_t(u_i)$.
Using the update definition and orthonormality of $\tilde U_i^{(t-1)}$,
\begin{align*}
\tilde U_i^{(t-1)\top}\Delta u_i^{(t)}
&=
\tilde U_i^{(t-1)\top}
\bigl(I-\tilde U_i^{(t-1)}\tilde U_i^{(t-1)\top}\bigr) g_i^{(t)} \\
&=
\bigl(\tilde U_i^{(t-1)\top}-\tilde U_i^{(t-1)\top}\bigr) g_i^{(t)} = 0.
\end{align*}
Hence $\Delta u_i^{(t)}\perp \mathrm{span}(\tilde U_i^{(t-1)})$.
By Part (1), this span contains all directions protected from original training
and earlier refinements, implying that each backward update is non-interfering.
\end{proof}

\noindent{\bf Proof of Proposition 4.3}
\begin{proof}
Let $P:=\tilde U\tilde U^\top$. Since $\tilde U^\top\tilde U=I$, $P$ is an
orthogonal projector, so $(I-P)$ is symmetric and idempotent. For each iterate,
write $g^{(k)}:=\nabla f(u^{(k)})$ and $\Delta^{(k)}:=(I-P)g^{(k)}$. Then
\[
\langle g^{(k)},\Delta^{(k)}\rangle
=\langle g^{(k)},(I-P)g^{(k)}\rangle
=\langle (I-P)g^{(k)},(I-P)g^{(k)}\rangle
=\|\Delta^{(k)}\|_2^2.
\]
By $L$-smoothness (descent lemma), with $d=-\eta\Delta^{(k)}$,
\[
f(u^{(k+1)}) \le f(u^{(k)}) + \langle g^{(k)},d\rangle + \frac{L}{2}\|d\|_2^2
= f(u^{(k)}) - \eta\|\Delta^{(k)}\|_2^2 + \frac{L\eta^2}{2}\|\Delta^{(k)}\|_2^2.
\]
Thus
\[
f(u^{(k)})-f(u^{(k+1)}) \ge \eta\Bigl(1-\frac{L\eta}{2}\Bigr)\|\Delta^{(k)}\|_2^2
\ge \frac{\eta}{2}\|\Delta^{(k)}\|_2^2,
\]
where the last inequality uses $\eta\le 1/L$. Summing over $k=0,\dots,K-1$
yields
\[
f(u^{(0)}) - f(u^{(K)}) \ge \frac{\eta}{2}\sum_{k=0}^{K-1}\|\Delta^{(k)}\|_2^2,
\]
which implies $f(u^{(K)})\le f(u^{(0)})$, with strict inequality if some
$\Delta^{(k)}\neq 0$.

Finally, $\Delta(u^{(0)})=(I-P)\nabla f(u^{(0)})=0$ iff $\nabla f(u^{(0)})\in
\mathrm{range}(P)=\mathrm{span}(\tilde U)$. In this case, for any $d$ with
$\tilde U^\top d=0$ (equivalently $Pd=0$), we have
$\langle \nabla f(u^{(0)}), d\rangle=\langle P\nabla f(u^{(0)}), d\rangle
=\langle \nabla f(u^{(0)}), Pd\rangle=0$.
\end{proof}

\section{Construction of Gradient Subspace via SVD}
\label{app:svd_gradient_subspace}

We detail the construction of the task-specific gradient subspace basis
$U_i \in \mathbb{R}^{(\ell d)\times r}$ used in Section~3.

\noindent{\bf Gradient Matrix.}
For a task $\mathcal{T}_i$ with prompt $u_i \in \mathbb{R}^{\ell\times d}$, let
$\nabla_{u_i}\mathcal{L}_i(B; u_i) \in \mathbb{R}^{\ell\times d}$ denote the gradient
of the task loss with respect to the prompt, evaluated on a minibatch
$B \sim \mathcal{D}_i$.
We vectorize this gradient as
\[
g_i(B) := \mathrm{vec}\!\left(\nabla_{u_i}\mathcal{L}_i(B; u_i)\right)
\in \mathbb{R}^{\ell d}.
\]
Collecting gradients over $N_i$ minibatches $\{B_j\}_{j=1}^{N_i}$, we form the task
gradient matrix
\[
G_i :=
\begin{bmatrix}
g_i(B_1) & g_i(B_2) & \cdots & g_i(B_{N_i})
\end{bmatrix}
\in \mathbb{R}^{(\ell d)\times N_i}.
\]

\noindent{\bf Singular Value Decomposition.}
We compute the singular value decomposition (SVD) of $G_i$:
\[
G_i = U_i^{\mathrm{full}} \Sigma_i V_i^\top,
\]
where $U_i^{\mathrm{full}} \in \mathbb{R}^{(\ell d)\times(\ell d)}$ contains the left
singular vectors, $\Sigma_i$ is a diagonal matrix of singular values
$\sigma_1 \ge \sigma_2 \ge \cdots \ge 0$, and
$V_i \in \mathbb{R}^{N_i\times N_i}$ contains the right singular vectors.

\noindent{\bf Low-Rank Gradient Subspace.}
We retain the top-$r$ left singular vectors corresponding to the largest singular
values, where $r \ll \ell d$, and define
\[
U_i :=
\begin{bmatrix}
u_1 & u_2 & \cdots & u_r
\end{bmatrix}
\in \mathbb{R}^{(\ell d)\times r}.
\]
By construction, the columns of $U_i$ are orthonormal and span the dominant gradient
subspace of task $\mathcal{T}_i$, capturing the directions along which updates to the
prompt were most significant during training.

\noindent{\bf Interpretation.}
The subspace $\mathrm{span}(U_i)$ provides a low-dimensional approximation of the
task's gradient space. Projecting gradients from a new task onto this subspace
yields a geometry-aware measure of update alignment across tasks, which we use both
for task correlation estimation and for defining protected directions during
backward refinement.

\section{Threshold Sensitivity Analysis}
\label{sec:threshold_sensitivity}

\textsc{SABER} uses two global thresholds: the projection threshold $\tau_s$ 
for update-level alignment (\textsc{SABER}-P) and the Wasserstein similarity 
threshold $\tau_{\text{WSS}}$ for response-level alignment (\textsc{SABER}-L). 
In all experiments, we fix $\tau_s = 0.1$ and $\tau_{\text{WSS}} = 0.2$ and 
apply the same values across all datasets and task orders, requiring no 
per-benchmark tuning.

Table~\ref{tab:sensitivity} reports AP and BWT on the Long Sequence benchmark 
(T5-Large, Order 1) across a range of threshold values. Very small thresholds 
select too many prior tasks for refinement, increasing negative transfer and 
hurting both AP and BWT. Conversely, very large thresholds are overly 
conservative, selecting too few prior tasks and reducing backward-transfer 
gains. The best results consistently occur in a moderate range around the 
default values, with \textsc{SABER}-P remaining stable within 
$\tau_s \in [0.05, 0.20]$ and \textsc{SABER}-L within 
$\tau_{\text{WSS}} \in [0.15, 0.30]$, yielding similar task selections and 
comparable performance across this range. These results indicate that 
\textsc{SABER} is reasonably robust and does not require fine-grained threshold 
tuning.

\begin{table}[h]
\centering
\caption{Sensitivity of AP and BWT to threshold values on Long Sequence 
(T5-Large, Order 1). Bold indicates the chosen default value.}
\label{tab:sensitivity}
\begin{subtable}{\columnwidth}
\centering
\caption{\textsc{SABER}-P: varying $\tau_s$}
\begin{tabular}{lcccccc}
\toprule
$\tau_s$  & 0.01  & 0.05  & \textbf{0.10}  & 0.20  & 0.40  & 0.60 \\
\midrule
AP        & 78.1  & 79.8  & \textbf{80.3}  & 80.1  & 78.9  & 77.2 \\
BWT       & -0.8  & 1.2   & \textbf{2.1}   & 1.9   & 1.2   & 0.4  \\
\bottomrule
\end{tabular}
\end{subtable}

\vspace{0.5em}

\begin{subtable}{\columnwidth}
\centering
\caption{\textsc{SABER}-L: varying $\tau_{\text{WSS}}$}
\begin{tabular}{lcccccc}
\toprule
$\tau_{\text{WSS}}$ & 0.05  & 0.10  & \textbf{0.20}  & 0.35  & 0.50  & 0.70 \\
\midrule
AP                  & 77.5  & 79.2  & \textbf{80.3}  & 80.0  & 79.1  & 77.8 \\
BWT                 & -1.1  & 1.0   & \textbf{1.9}   & 1.8   & 1.1   & 0.3  \\
\bottomrule
\end{tabular}
\end{subtable}
\end{table}

\section{Additional Dataset and Metric Details} \label{dataset:append}


\noindent{\bf Dataset.}
Following prior work~\cite{li2026turning}, we adopt \textbf{SuperNI} and \textbf{Long Sequence} as benchmark suites for evaluating continual learning methods in large language models (LLMs). Detailed task descriptions and evaluation metrics are summarized in Table~\ref{tab:benchmark_details}, and two distinct task orders are considered for each benchmark, as shown in Table~\ref{tab:task_orders}. The \textbf{SuperNI} benchmark~\cite{wang2022super} comprises a diverse collection of NLP tasks paired with expert-written instructions, enabling rigorous and realistic evaluation in continual learning settings. It consists of 15 sequential tasks. For each task, 1{,}000 instances are randomly sampled for training, while 100 instances are used for validation and testing. The \textbf{Long Sequence} benchmark~\cite{razdaibiedina2023progressive} also contains 15 classification tasks, specifically designed to evaluate continual learning with LLMs under long-context scenarios. For each task, 1{,}000 samples are used for training, and 500 samples per class are allocated for both validation and testing. This benchmark highlights challenges associated with long-context dependencies and task diversity in sequential learning.

\noindent{\bf Metrics.}
Let $\mathrm{Acc}_{\mathcal{T}_t,\mathcal{T}_k}$ denote the test performance on task
$\mathcal{T}_k$ after training on task $\mathcal{T}_t$, measured by accuracy for
classification tasks and ROUGE-L~\cite{chin2004rouge} for other tasks.
We report three standard continual learning metrics:
\emph{Average Performance} (AP)~\cite{chaudhry2018riemannian},
defined as the mean performance over all tasks after training on the final task,
$\mathrm{AP}=\frac{1}{T}\sum_{t=1}^{T}\mathrm{Acc}_{\mathcal{T}_T,\mathcal{T}_t}$;
\emph{Forward Transfer} (FWT)~\cite{lopez2017gradient}, which quantifies the effect
of prior tasks on learning new tasks; and
\emph{Backward Transfer} (BWT)~\cite{ke2022continual}, which measures the impact
of learning new tasks on previously learned tasks.

\begin{table*}[t]
\centering
\small
\setlength{\tabcolsep}{6pt}
\renewcommand{\arraystretch}{1.15}
\caption{Details of the SuperNI and Long Sequence benchmarks.}
\label{tab:benchmark_details}
\begin{tabular}{l l l | l l l}
\toprule
\multicolumn{3}{c|}{\textbf{SuperNI Benchmark}} &
\multicolumn{3}{c}{\textbf{Long Sequence Benchmark}} \\
\cmidrule(lr){1-3}\cmidrule(lr){4-6}

\textbf{Dataset name} & \textbf{Task} & \textbf{Metric} &
\textbf{Dataset name} & \textbf{Task} & \textbf{Metric} \\
\midrule

1.~task639  & dialogue generation        & Rouge-L  & 1.~Yelp     & sentiment analysis              & accuracy \\
2.~task1590 & dialogue generation        & Rouge-L  & 2.~Amazon   & sentiment analysis              & accuracy \\
3.~task1729 & dialogue generation        & Rouge-L  & 3.~DBpedia  & topic classification            & accuracy \\
4.~task181  & information extraction     & Rouge-L  & 4.~Yahoo    & topic classification            & accuracy \\
5.~task748  & information extraction     & Rouge-L  & 5.~AG News  & topic classification            & accuracy \\
6.~task1510 & information extraction     & Rouge-L  & 6.~MNLI     & natural language inference      & accuracy \\
7.~task002  & question answering         & Rouge-L  & 7.~QQP      & paraphrase detection            & accuracy \\
8.~task073  & question answering         & Rouge-L  & 8.~RTE      & natural language inference      & accuracy \\
9.~task591  & question answering         & Rouge-L  & 9.~SST-2    & sentiment analysis              & accuracy \\
10.~task511 & summarization              & Rouge-L  & 10.~WiC     & word sense disambiguation       & accuracy \\
11.~task1290& summarization              & Rouge-L  & 11.~CB      & natural language inference      & accuracy \\
12.~task1572& summarization              & Rouge-L  & 12.~COPA    & question answering              & accuracy \\
13.~task363 & sentiment analysis         & accuracy & 13.~BoolQA  & boolean question answering      & accuracy \\
14.~task875 & sentiment analysis         & accuracy & 14.~MultiRC & question answering              & accuracy \\
15.~task1687& sentiment analysis         & accuracy & 15.~IMDB    & sentiment analysis              & accuracy \\

\bottomrule
\end{tabular}
\end{table*}

\begin{table*}[t]
\centering
\small
\setlength{\tabcolsep}{4pt}   
\renewcommand{\arraystretch}{0.15}
\caption{Different task order of Long Sequence and SuperNI benchmark.}
\label{tab:task_orders}
\begin{tabular}{c p{0.35\textwidth} | p{0.43\textwidth}}
\toprule
\textbf{Order} & \textbf{SuperNI Benchmark} & \textbf{Long Sequence Benchmark} \\
\midrule

\textbf{1} &
task1572 $\rightarrow$ task363 $\rightarrow$ task1290 $\rightarrow$ task181 $\rightarrow$ \newline
task002 $\rightarrow$ task1510 $\rightarrow$ task639 $\rightarrow$ task1729 $\rightarrow$ \newline
task073 $\rightarrow$ task1590 $\rightarrow$ task748 $\rightarrow$ task511 $\rightarrow$ \newline
task591 $\rightarrow$ task1687 $\rightarrow$ task875
&
MNLI $\rightarrow$ CB $\rightarrow$ WiC $\rightarrow$ COPA $\rightarrow$ QQP $\rightarrow$ \newline
BoolQA $\rightarrow$ RTE $\rightarrow$ IMDB $\rightarrow$ Yelp $\rightarrow$ Amazon $\rightarrow$ \newline
SST-2 $\rightarrow$ DBpedia $\rightarrow$ AG News $\rightarrow$ \newline
MultiRC $\rightarrow$ Yahoo
\\
\midrule

\textbf{2} &
task748 $\rightarrow$ task073 $\rightarrow$ task1590 $\rightarrow$ task639 $\rightarrow$ \newline
task1572 $\rightarrow$ task1687 $\rightarrow$ task591 $\rightarrow$ task363 $\rightarrow$ \newline
task1510 $\rightarrow$ task1729 $\rightarrow$ task181 $\rightarrow$ task511 $\rightarrow$ \newline
task002 $\rightarrow$ task1290 $\rightarrow$ task875
&
Yelp $\rightarrow$ Amazon $\rightarrow$ MNLI $\rightarrow$ CB $\rightarrow$ COPA $\rightarrow$ \newline
QQP $\rightarrow$ RTE $\rightarrow$ IMDB $\rightarrow$ SST-2 $\rightarrow$ DBpedia $\rightarrow$ \newline
AG News $\rightarrow$ Yahoo $\rightarrow$ MultiRC $\rightarrow$ BoolQA $\rightarrow$ \newline
WiC
\\
\bottomrule
\end{tabular}
\end{table*}

\section{Additional Training Details}
\label{add:traningdetails}
\noindent{\bf Training Details.}
All experiments are conducted in a continual learning setting with a frozen backbone model. We consider both autoregressive and sequence-to-sequence architectures, with training configurations summarized in Table~\ref{tab:training_config}. For autoregressive models, we use Qwen3-4B-Instruct and LLaMA-2-7B as backbones, while T5-large is adopted for sequence-to-sequence models. Each benchmark consists of 15 sequential tasks learned one at a time using task-specific soft prompts. Across both architectures, the prompt prefix length is set to 10 and the maximum input sequence length is capped at 256 tokens. We use a learning rate of 0.03 for autoregressive backbones and 0.3 for sequence-to-sequence backbones, with batch sizes of 16 and 8, respectively. Each task is trained for 5 epochs in the autoregressive setting and 10 epochs in the sequence-to-sequence setting, with early stopping based on validation performance enabled for all experiments. For each task, 1{,}000 training samples are used, with fixed validation and test splits to ensure consistent evaluation across methods. We use protected-subspace rank $r=3$ across all experiments.
We found larger ranks, such as $r=5$, to give comparable or slightly worse
performance while increasing memory overhead. Backward refinement is performed
only during the final two epochs of current-task training, after the current-task
prompt has stabilized, using a smaller learning rate of $0.001$. For the Replay
baseline, we follow prior work and use a replay buffer of 100 samples per task. All reported results are averaged over three independent runs.

\noindent{\bf Prompt tuning and initialization.}
For all architectures, we freeze the backbone parameters and optimize only task-specific soft prompts. For instruction-tuned autoregressive models, prompts are initialized using a short natural-language instruction that enumerates task label names (e.g., ``Classify into: \ldots''), and the corresponding token embeddings are used to seed the 10-token soft prompt. For sequence-to-sequence models, prompts are initialized directly in the embedding space. We also experimented with random prompt initialization and observed a slight but not statistically significant degradation in performance compared to instruction-based initialization.

\noindent{\bf Runtime and Hardware.}
All experiments are conducted on a single NVIDIA A100 GPU with 80\,GB memory per run. We implement all methods in PyTorch and report GPU runtime as total wall-clock training time over the full task sequence, along with the additional overhead relative to the corresponding prompt-based baseline without backward refinement under the same framework (FPP or SPA). This overhead isolates the extra computation introduced by task-correlation estimation and backward refinement. Projection-based correlation incurs additional cost from computing and maintaining task-specific gradient subspaces, whereas loss-distribution-based correlation relies on lightweight response-level statistics and is therefore substantially more memory efficient. Backward refinement further adds computation proportional to the number of selected prior tasks and the refinement budget. Memory usage is reported as the total memory footprint during training and includes both prompt parameters and any auxiliary statistics maintained by the correlation criterion. To ensure fair comparison and reproducibility, all methods are evaluated with identical training configurations for a given backbone.

\begin{table}[t]
\centering
\small
\setlength{\tabcolsep}{8pt}
\renewcommand{\arraystretch}{1.08}
\caption{Training configurations used in our experiments for autoregressive and seq-to-seq backbones.}
\label{tab:training_config}
\begin{tabular}{lcc}
\toprule
\textbf{Argument} & \textbf{Autoregressive} & \textbf{Seq-to-seq} \\
\midrule
Architecture & Autoregressive & Seq-to-seq \\
Base model & \shortstack[l]{Qwen3-4B-Instruct;\\LLaMA-2-7B} & T5-Large \\
Number of tasks & 15 & 15 \\
Prefix length & 10 & 10 \\
Max sequence length & 256 & 256 \\
Learning rate & 0.03 & 0.3 \\
Batch size & 16 & 8 \\
Samples per class ($k$) & 1000 & 1000 \\
Training epochs & 5 & 10 \\
Frozen backbone & Yes & Yes \\
Early stopping & Enabled & Enabled \\
Protected-subspace rank ($r$) & 3 & 3 \\
Backward refinement LR & 0.001 & 0.001 \\
Backward refinement schedule & Final 2 epochs & Final 2 epochs \\
\bottomrule
\end{tabular}
\end{table}

\section{Additional Results}
\paragraph{Forward transfer analysis.}
Table~\ref{tab:fwt} reports forward transfer (FWT) on \textsc{T5-large}, which measures how learning previous tasks influences performance on newly arriving tasks. Higher FWT indicates that earlier tasks provide beneficial initialization or shared representations for subsequent tasks, while negative values indicate limited or harmful forward influence. Across baselines, methods such as SAPT and SHLPT achieve strong forward transfer by explicitly promoting prompt sharing or reuse across tasks. Notably, SHLPT is explicitly designed to \emph{reduce negative transfer} by learning a task-similarity estimator and adapting the transfer strategy based on whether prior tasks are similar or dissimilar, which aligns closely with the forward-transfer-style metric (where negative values indicate negative transfer) and can therefore yield stronger FWT than methods primarily targeting backward refinement. In contrast, SABER is primarily designed to enable \emph{backward} knowledge transfer through selective refinement of prior prompts, rather than to maximize forward transfer. Consequently, SABER-FPP exhibits near-zero or slightly negative FWT, reflecting its emphasis on task isolation. SABER-SPA achieves modest positive FWT, as shared prompt augmentation allows limited forward reuse while still preserving backward refinement safety. These results highlight a trade-off between forward and backward transfer: while some methods optimize forward transfer at the cost of backward interference, SABER prioritizes stable backward transfer while maintaining competitive forward transfer where shared prompt structures are permitted.

\begin{table*}[t]
\centering
\small
\setlength{\tabcolsep}{8pt}
\renewcommand{\arraystretch}{1.15}
\caption{\small Forward Transfer (FWT$\uparrow$) on \textbf{Long Sequence} and \textbf{SuperNI} for T5-Large.
Higher values indicate stronger forward transfer to new tasks.}
\begin{tabular}{l c c c c}
\toprule
\textbf{Method}
& \multicolumn{2}{c}{\textbf{Long Sequence}}
& \multicolumn{2}{c}{\textbf{SuperNI}} \\
\cmidrule(lr){2-3}\cmidrule(lr){4-5}

& \textbf{Order1} & \textbf{Order2}
& \textbf{Order1} & \textbf{Order2} \\
\midrule

Replay       &  0.28 &  0.76 & -1.56 & -1.43 \\
L2P          & -2.32 & -1.66 & -19.23 & -19.54 \\
LFPT5        & -1.14 & -0.48 & -0.78 & -0.44 \\
ProgPrompt   & -4.14 & -5.58 & -3.43 & -4.98 \\
CODA Prompt  &  0.38 &  0.46 & -0.65 & -0.87 \\
SHLPT        &  0.62 &  0.67 &  0.32 & \textbf{1.98} \\
SAPT         &  1.85 & \textbf{2.87} & \textbf{1.87} &  1.32 \\
\midrule
\textbf{SABER-FPP}
             & -0.65 & -0.32 & -1.19 & -0.93 \\
\textbf{SABER-SPA}
             & \textbf{0.12} & 0.34 & 0.12 & \textbf{0.54} \\
\bottomrule
\end{tabular}
\label{tab:fwt}
\end{table*}

\begin{table}[h]
\centering
\small
\setlength{\tabcolsep}{10pt}
\renewcommand{\arraystretch}{1.15}
\caption{Memory overhead (KB) of task-correlation strategies as the number of tasks
increases. Memory growth is independent of the backbone given
fixed prompt dimensionality.}
\begin{tabular}{c cc}
\toprule
\textbf{Number of Tasks} 
& \textbf{Gradient-based (KB)} 
& \textbf{Loss-based (KB)} \\
\midrule
5  & 800  & \bfseries 20 \\
10 & 1600 & \bfseries 40 \\
15 & 2400 & \bfseries 60 \\
\bottomrule
\end{tabular}
\label{tab:memory_scaling_tasks}
\end{table}





\section{Intuition behind loss-distribution based correlation.}
Figure~\ref{fig:WSS} provides an intuitive illustration of the proposed loss-based correlation criterion. We visualize the empirical loss distributions of two tasks under a frozen backbone before and after learning a task-specific prompt. In the top panel, the backbone without prompts induces distinct loss distributions $L_1^{(0)}$ and $L_2^{(0)}$ for Tasks~1 and~2, resulting in a large Wasserstein distance $d^{(0)} = W(L_1^{(0)}, L_2^{(0)})$. After training a prompt $u_1$ on Task~1 (bottom panel), the induced loss distributions $L_1^{(1)}$ and $L_2^{(1)}$ become closer, yielding a smaller distance $d^{(1)} = W(L_1^{(1)}, L_2^{(1)})$. The reduction $d^{(0)} - d^{(1)}$ captures the extent to which optimizing the prompt for Task~1 also improves the loss behavior on Task~2, indicating correlation between Task~1 and Task~2.Importantly, this criterion operates at the level of model responses rather than static semantic similarity, allowing it to detect task relationships that emerge through optimization dynamics.





\section{Qualitative visualization of dataset-level representation for long sequence benchmark}
Figure~\ref{fig:cluster} provides a qualitative view of dataset-level representation structure under different sentence encoders. Each point corresponds to an individual example, with colors indicating dataset identity. We observe that embeddings produced by sentence-transformers/all-MiniLM-L6-v2 (right) and EmbeddingGemma-300M (left) exhibit clearer clustering and separation across datasets. In particular, semantically distinct datasets such as Yelp, QQP, and WiC form more compact and well-separated clusters under MiniLM, while related datasets (e.g., MNLI, RTE, and CB) show partial overlap consistent with their semantic similarity. This visualization suggests that stronger sentence encoders yield more discriminative dataset-level representations, which can facilitate more reliable task similarity estimation and help explain the effectiveness of representation-based or loss-based correlation signals used in selective backward refinement.

\section{Sacalibility}
Figure~\ref{fig:model_scale} examines the effect of model scaling on average performance (AP) and backward transfer (BWT) across different backbone sizes, ranging from T5-Large (0.77B) to Qwen-3 (4B) and LLaMA-2 (7B). While all methods benefit from increased model capacity in terms of AP, SABER consistently achieves the highest performance across all backbones. More importantly, SABER is the only method that maintains strong and consistently positive BWT at all scales, indicating effective backward knowledge transfer that does not diminish with larger models.
In contrast, ProgPrompt and SHLPT 
exhibit negative or near-zero BWT regardless of model size, 
suggesting that scaling alone is 
insufficient to enable backward transfer.

\begin{figure}[htbp]
    \centering
\includegraphics[width=0.3\textwidth]{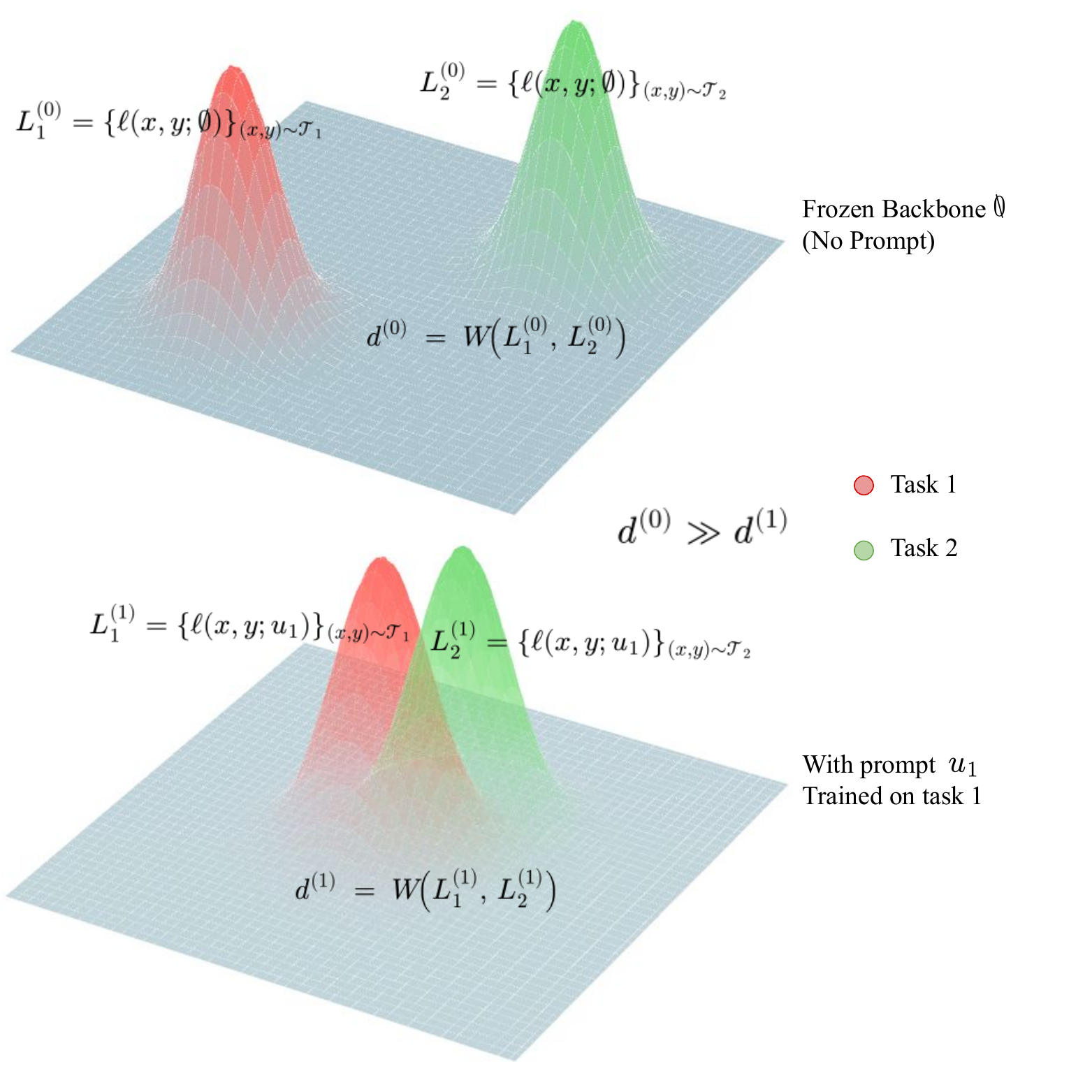}
    \caption{Loss-based correlation illustration. The top panel shows the loss distributions of two tasks under a frozen backbone without prompts, where the distributions are well separated and yield a large Wasserstein distance $d^{(0)}$. The bottom panel shows the loss distributions after training a prompt on Task~1, which brings the two distributions closer and reduces the distance to $d^{(1)}$.}
    
    \label{fig:WSS}
\end{figure} 

\begin{figure}[htbp]
    \centering
    \begin{minipage}{0.48\columnwidth}
        \centering
        \includegraphics[width=0.8\linewidth]{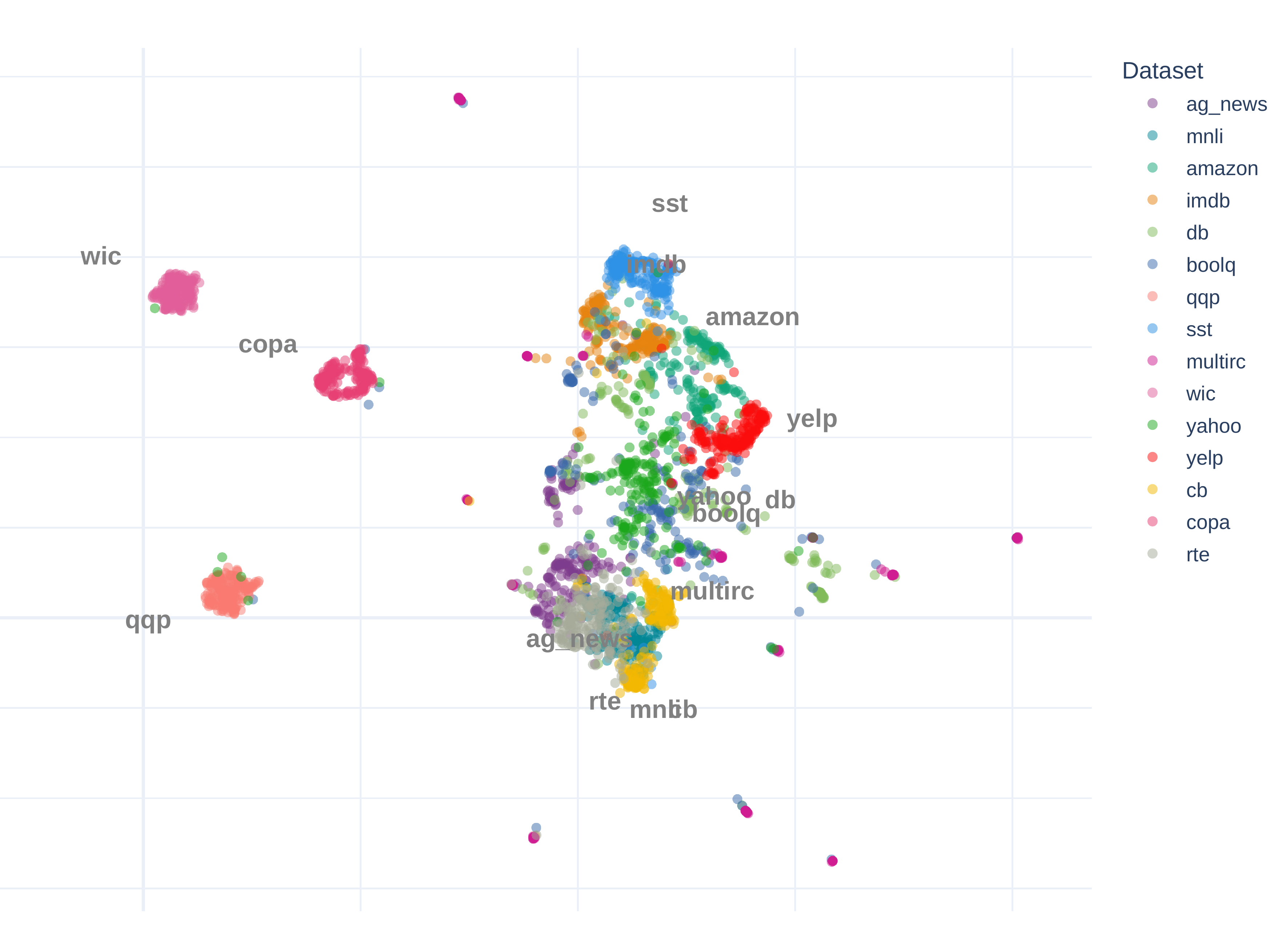}
    \end{minipage}
    \hfill
    \begin{minipage}{0.48\columnwidth}
        \centering
        \includegraphics[width=0.8\linewidth]{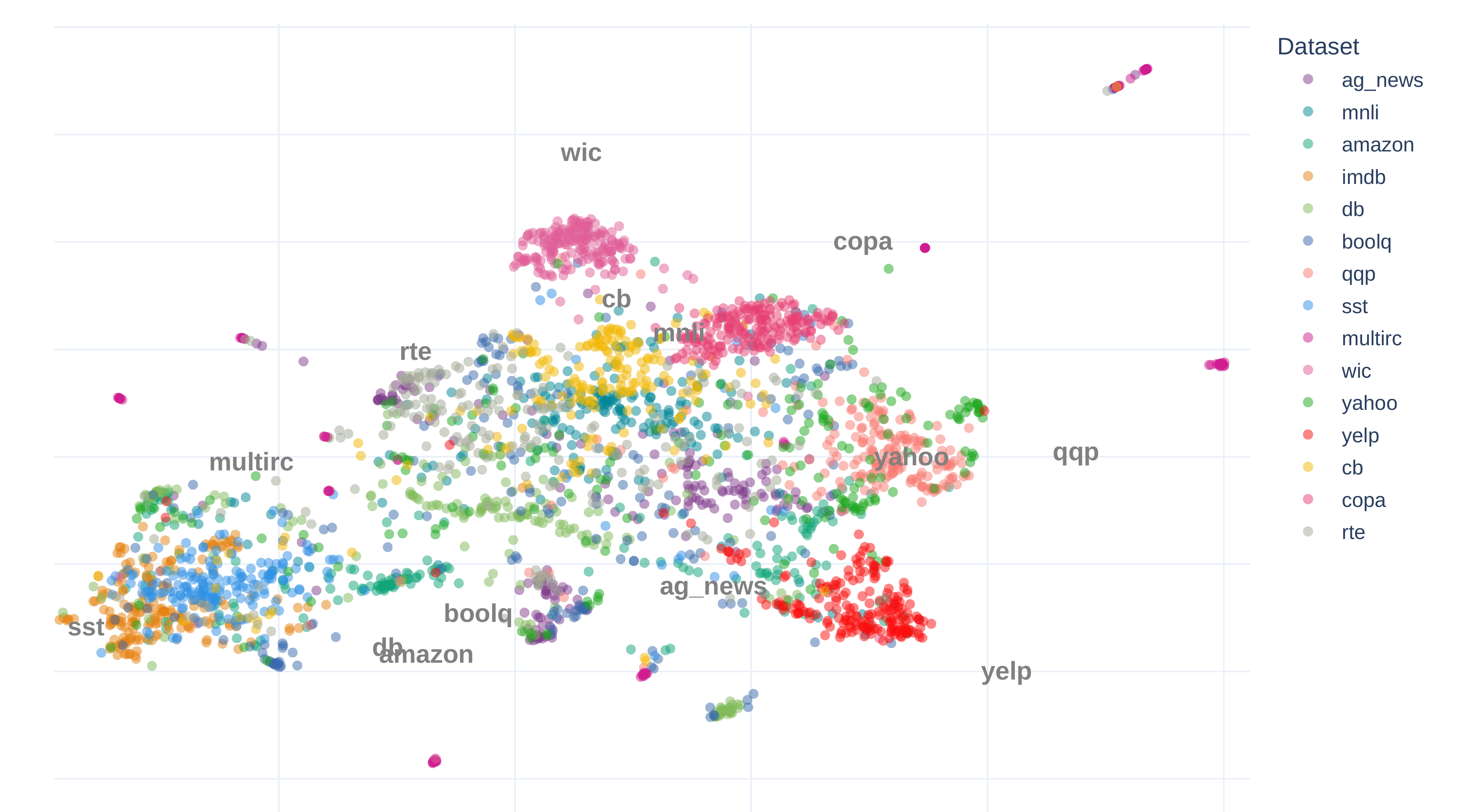}
    \end{minipage}
    \vspace{-0.25cm}
\caption{Qualitative visualization of dataset-level representations under different encoders.
    \textbf{Left:} embeddings produced by EmbeddingGemma-300M.
    \textbf{Right:} embeddings produced by sentence-transformers/all-MiniLM-L6-v2.
    Each point corresponds to an example and colors denote datasets.
    }
    \label{fig:cluster}
\end{figure}

\begin{figure}[htbp]
\centering

\includegraphics[width=0.4\textwidth]{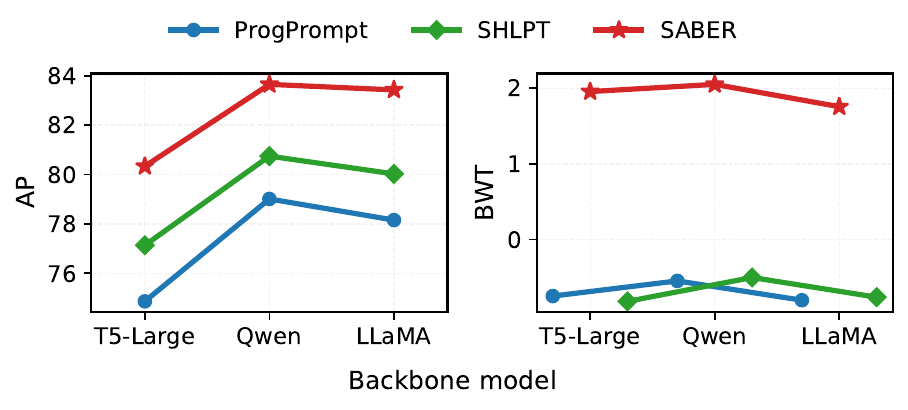}

\caption{\small Effect of model scaling across T5-Large (0.77B), Qwen-3 (4B), and LLaMA-2 (7B).}

    \label{fig:model_scale}
\end{figure}

\begin{table}[h]
\centering
\small
\caption{Quantities stored per prior task $i$ in \textsc{SABER}.}
\label{tab:stored_quantities}
\begin{tabular}{lp{5cm}}
\toprule
\textbf{Stored Quantity} & \textbf{Purpose} \\
\midrule
Prompt $u_i$ & Inference and backward refinement \\
Protected subspace $\tilde{U}_i$ & Safe backward refinement \\
Gradient subspace $U_i$ (\textsc{SABER}-P only) & Projection-based task correlation \\
Mean gradient $\bar{g}_i$ (\textsc{SABER}-P only) & Gradient alignment for task correlation \\
WSS score (\textsc{SABER}-L only) & Loss-based task correlation \\
\bottomrule
\end{tabular}
\label{tab:store}
\end{table}

\section{Relationship Between Task Similarity and Pairwise BWT}
\label{sec:similarity_bwt}

Table~\ref{tab:similarity_bwt} reports measured task-correlation 
scores alongside pairwise BWT for representative task pairs on the 
Long Sequence benchmark. Task correlation is measured using the 
projection-based score $s_i$ (Eq.~1). The results show that higher 
task-correlation scores generally align with stronger pairwise BWT, 
while low-similarity pairs tend to yield weak or negative backward 
transfer. Importantly, conceptual or semantic similarity alone is 
insufficient: pairs that appear related may still fail to produce 
positive BWT without safe refinement, which is precisely why SABER 
uses optimization-based criteria, gradient-space compatibility or 
loss-response alignment, rather than relying on semantic intuition.

\begin{table}[h]
\centering
\caption{Task-correlation score and pairwise BWT for representative 
task pairs on the Long Sequence benchmark. Higher similarity 
generally corresponds to better pairwise BWT.}
\label{tab:similarity_bwt}
\begin{tabular}{lcc}
\toprule
\textbf{Pair} & \textbf{Similarity} & \textbf{BWT} \\
\midrule
Yelp $\rightarrow$ Amazon    & 0.44 &  1.92 \\
MNLI $\rightarrow$ RTE       & 0.35 &  1.41 \\
MultiRC $\rightarrow$ Yahoo  & 0.22 &  0.63 \\
RTE $\rightarrow$ WiC        & 0.23 &  0.58 \\
BoolQ $\rightarrow$ WiC      & 0.14 &  0.06 \\
COPA $\rightarrow$ Yahoo     & 0.09 & -0.21 \\
IMDb $\rightarrow$ WiC       & 0.07 & -0.34 \\
QQP $\rightarrow$ Amazon     & 0.05 & -0.47 \\
\bottomrule
\end{tabular}
\end{table}

\end{document}